%% file: main.tex
\documentclass[sigconf, nonacm]{acmart}

\setcopyright{none}

\usepackage{amsthm}
\newtheorem{proposition}{Proposition}

\usepackage{xspace}
\newcommand{\mytilde}{\textasciitilde}

\usepackage[most]{tcolorbox}
\usepackage{CJKutf8}
\newcommand{\zh}[1]{\begin{CJK*}{UTF8}{gbsn}#1\end{CJK*}}

\usepackage{enumitem}
\usepackage{multirow}
\usepackage{titletoc}
\usepackage{listings}

\begin{document}

\title{StreamProfileBench: A Benchmark for Fine-Grained User Profile Inference in Real-World Streaming Scenarios}

\author{Sizhe Wang}
\authornote{Equal Contribution.}
\authornote{The first author completed this work during an internship at DISC lab, Fudan University.}
\email{wangsz26@m.fudan.edu.cn}
\affiliation{\institution{Fudan University}\city{Shanghai}\country{China}}

\author{Feiyu Duan}
\authornotemark[1]
\email{fyduan25@m.fudan.edu.cn}
\affiliation{\institution{Fudan University}\city{Shanghai}\country{China}}

\author{Juelin Wang}
\email{jlwang26@m.fudan.edu.cn}
\affiliation{\institution{Fudan University}\city{Shanghai}\country{China}}

\author{Liwen Zhang}
\email{zhang.liwen@mail.shufe.edu.cn}
\affiliation{\institution{Shanghai University of Finance and Economics}\city{Shanghai}\country{China}}

\author{Zhongyu Wei}
\correspondingauthor
\email{zywei@fudan.edu.cn}
\affiliation{\institution{Fudan University} \institution{Shanghai Innovation Institute}\city{Shanghai}\country{China}}

\renewcommand{\shortauthors}{Wang et al.}

\begin{abstract}
Large Language Models (LLMs) have reshaped user profiling, yet current evaluations mainly focus on static data snapshots. This paradigm overlooks the reality of personalized systems, where User-Generated Content (UGC) arrives continuously and fine-grained profiles evolve rapidly. To bridge this gap, we introduce StreamProfileBench, a large-scale benchmark for fine-grained streaming user profiling. We formalize streaming user profiling as a continuous state maintenance task and curate a highly authentic dataset comprising over 120,000 UGC posts from 7,000+ real users across five diverse platforms. By leveraging the temporal correlation of user interests, we further propose a novel, annotation-free evaluation framework. Extensive experiments across 14 leading LLMs reveal that continuous profile updating remains an open challenge. Models exhibit a systemic conservative bias, over-retaining past interests while failing to recognize interest decay. Ablation experiments further validate the practical utility and necessity of the streaming paradigm. Data and code are hosted in \url{https://github.com/WaterWang-001/StreamProfileBench}.
\end{abstract}

\begin{CCSXML}
<ccs2012>
   <concept>
       <concept_id>10010147.10010178.10010179</concept_id>
       <concept_desc>Computing methodologies~Natural language processing</concept_desc>
       <concept_significance>500</concept_significance>
       </concept>
   <concept>
       <concept_id>10002951.10003260.10003261.10003271</concept_id>
       <concept_desc>Information systems~Personalization</concept_desc>
       <concept_significance>500</concept_significance>
       </concept>
 </ccs2012>
\end{CCSXML}

\ccsdesc[500]{Computing methodologies~Natural language processing}
\ccsdesc[500]{Information systems~Personalization}

\keywords{User profiling, streaming data, large language models, benchmark evaluation}

\maketitle

\input{Section/Introduction}

\input{Section/Related_Work}
\input{Section/StreamProfileBench}

\input{Section/Experiment}
\input{Section/Discussion}
\input{Section/Conclusion}

\input{Section/Limitation}

\newpage

\bibliographystyle{ACM-Reference-Format}
\bibliography{custom}

\appendix
\input{Section/Appendix}

\end{document}

%% file: Section/Introduction.tex
\section{Introduction}

User profiling is the foundation of personalized service systems, aiming to extract attributes, interests, and intentions from personalized traces such as user-generated content (UGC) \citep{eke2019survey,prottasha2025user,tan2023user}. The rapid development of Large Language Models (LLMs) has reshaped the profiling paradigm by extending user representations from latent vectors to natural language descriptions \citep{gao2023chat,geng2022recommendation,bao2023tallrec}. Today, LLM-based user profiling has become a key online component in a wide range of complex personalized systems, including recommendation systems, conversational assistants, and role-playing agent systems \citep{liu2025recoworld,wang2023recagent,piao2025agentsociety}.


Previous studies have explored LLM-based user profiling ranging from \textit{demographic attribute profiling} \citep{li2025conf,prottasha2025user} to \textit{interest and preference profiling} \citep{lu2025prompt,sabouri2025towards,shi2025you}.
However, these efforts predominantly operate on static data snapshots. Such static settings are insufficient when user profiling serves as an online component in complex personalized systems, where two inherent challenges arise:
\begin{itemize}
    \item  \textbf{Streaming Data}: users continuously generate new content over time, making it impractical to perform profiling in a single pass over a fixed corpus.
    \item  \textbf{Evolving Profiles}: user interests are inherently hierarchical \citep{qi2021hierec,tan2023user}. While coarse-grained preferences remain largely stable, fine-grained interests shift rapidly~\citep{purificato2024user}. A profile that fails to track these fine-grained shifts quickly becomes outdated, causing degraded personalization quality in downstream systems. 
\end{itemize}

The streaming paradigm naturally addresses both challenges.                         
  Incrementally ingesting arriving content ensures \textbf{currency},                 
  directly responding to \emph{streaming data}; the persistent                        
  profile state further enables tracking interest \textbf{decay} and                  
  mapping the \textbf{narrative} of evolution, capturing the dynamics
  of \emph{evolving profiles}. Figure~\ref{fig:intro} illustrates
  these advantages over static profiling.
\begin{figure*}
    \centering
    \includegraphics[width=0.9\linewidth]{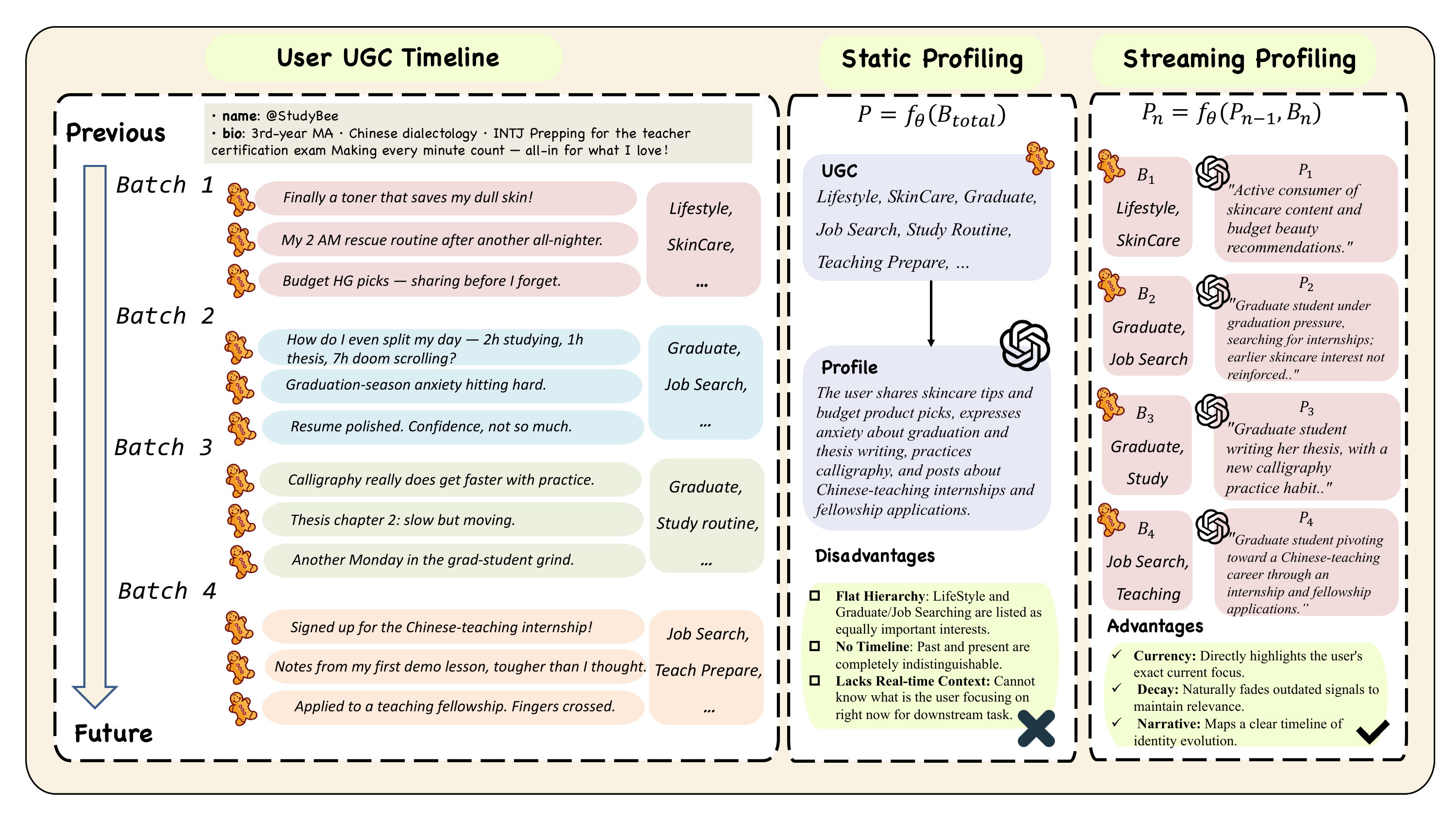}
    \caption{Illustration of differences between Static Profiling and Streaming Profiling. The streaming paradigm offers three key advantages: it highlights the user's exact current focus (Currency), naturally fades outdated signals (Decay), and maps a clear timeline of identity evolution (Narrative).}
    \label{fig:intro}
\end{figure*}



To operate effectively in such scenarios, LLMs need to both incrementally ingest arriving UGC and dynamically maintain user profiles as interests evolve, which prior user profiling evaluation work has largely overlooked \citep{wang2025lettingo,prottasha2025user}. While works such as PersonaMem \citep{jiang2025know} and Inside Out \citep{zhao2026inside} have explored maintaining evolving user profiles with LLMs, they rely on simulated dialogues and do not address the challenge of continuously profiling users from real-world UGC streams.

To fill this gap, we propose StreamProfileBench, a large-scale benchmark for fine-grained streaming user profiling. Curated through a rigorous pipeline, it structures over 120,000 real-world posts from 7,000+ users across five platforms into continuous streams via a buffering mechanism. Furthermore, we formalize streaming profiling as a state maintenance task, leveraging future interest anchors as self-verifying ground truth to rigorously assess models' ability to track temporal interest drift. 

Experiments across 14 LLMs show that streaming profile maintenance remains an open challenge. The strongest models achieves 52.26\% average recall and 54.97\% F1 recall, while weaker models only achieve 25.18\% average recall and 18.85\% F1 recall. Specifically, models struggle with novel-interest recognition and interest decay, revealing a shared inability to track how interests dynamically emerge and fade over time. Furthermore, our ablations validate the practical superiority of the streaming mechanism, demonstrating that incremental persona passing consistently outperforms both no-passing and long-context baselines.


To summarize, our contributions are threefold:
\begin{itemize}
    \item We define the problem of streaming user profiling, where LLMs must continuously track and maintain user profiles from incrementally arriving UGC, and formalize it as a state maintenance task.
    
    \item We propose StreamProfileBench, a user-centric benchmark of 7,000+ real users and 120,000+ posts across five platforms, with an evaluation framework that uses future interest anchors as self-verifying ground truth for temporal interest drift.

    \item We benchmark 14 popular LLMs. Results show that even frontier models struggle with streaming user profiling, with novel-interest recognition and interest decay as the dominant limitations, while ablations further validate the practical merit of the streaming formulation.

\end{itemize}

%% file: Section/Related_Work.tex
\section{Related Work}

\paragraph{User Profile Construction} Recent research has increasingly explored the use of LLMs to derive structured user representations from unstructured user-generated content (UGC), including tasks such as stance detection~\cite{hu2024ladder}, ideology identification~\cite{mu2024navigating}, and sentiment analysis~\cite{kheiri2023sentimentgpt}. Wu et al.~\cite{wu2024understanding} demonstrate that incorporating users' historical responses improves profiling accuracy. PersonaX~\cite{shi2025personax} employs behavior clustering to extract representative profile features, while NextQuill~\cite{zhao2025nextquill} utilizes causal inference to identify signals that reflect users' true preferences. DPL~\cite{qiu2025measuring} further models inter-user differences to capture individual-specific characteristics. However, these approaches predominantly rely on offline profile construction. In contrast, our work focuses on a streaming paradigm for user profiling.

\begin{table*}[t]
    \centering
    \small
    \begin{tabular}{lccccr}
        \toprule
        \textbf{Benchmark} 
        & \textbf{Long Horizon} 
        & \textbf{Evolving Profile} 
        & \textbf{Streaming Data} 
        & \textbf{Real User Data} 
        & \textbf{\#Users} \\
        \midrule
        LaMP~\cite{salemi2024lamp}
        & $\times$ & $\times$ & $\times$ & $\checkmark$ & N/A \\
        PersonaBench~\cite{tan2025personabench}
        & $\checkmark$ & $\times$ & $\times$ & $\times$ & 10 \\
        PrefEval~\cite{zhao2025llms}
        & $\checkmark$ & $\times$ & $\times$ & $\times$ & N/A \\
        PersonaMem-v1~\cite{jiang2025know}
        & $\checkmark$ & $\checkmark$ & $\times$ & $\times$ & 20 \\
        PersonaMem-v2~\cite{jiang2025personamem}
        & $\checkmark$ & $\checkmark$ & $\times$ & $\times$ & 1000 \\
        \midrule
        \textbf{StreamProfileBench} 
        & $\checkmark$ & $\checkmark$ & $\checkmark$ & $\checkmark$ & 7451 \\
        \bottomrule
    \end{tabular}
    \caption{Comparison of StreamProfileBench to other user profiling benchmarks.}
    \label{tab:stream_profile_bench}
\end{table*}

\paragraph{User Profile Evaluation} A range of benchmarks have been proposed to evaluate user profiling methods~\cite{bartkowiak2025edgewisepersona, liu2025itdr, ren2026alpbench}. LaMP~\cite{salemi2024lamp} models user preferences based on historical behaviors and uses prediction accuracy. PrefEval~\cite{zhao2025llms} constructs long-context conversational histories to assess models' ability to understand and infer user preferences. PersonaBench~\cite{tan2025personabench} uses synthetic privacy data to evaluate user attribute and preference extraction. However, these benchmarks are limited to static prediction, leaving long-term profile tracking largely unexplored. PersonaMem-v1~\cite{jiang2025know} investigates whether models can memorize, update, and apply evolving profiles, while PersonaMem-v2~\cite{jiang2025personamem} further incorporates implicit preferences; however, both rely on synthetic personas without grounding in real UGC. Unlike prior work, StreamProfileBench utilizes real social media data to evaluate a model’s ability to continuously infer and maintain accurate user profiles over time, offering a more realistic evaluation setting. Table~\ref{tab:stream_profile_bench} shows the comparison of profile benchmarks.

%% file: Section/StreamProfileBench.tex
\section{StreamProfileBench}
\label{sec:streamprofilebench}
Figure~\ref{fig:overview} gives an overall illustration of the design and construction of StreamProfileBench. We will introduce them in detail in this section.

\begin{figure*}[t]
    \centering
    \includegraphics[width=0.9\textwidth]{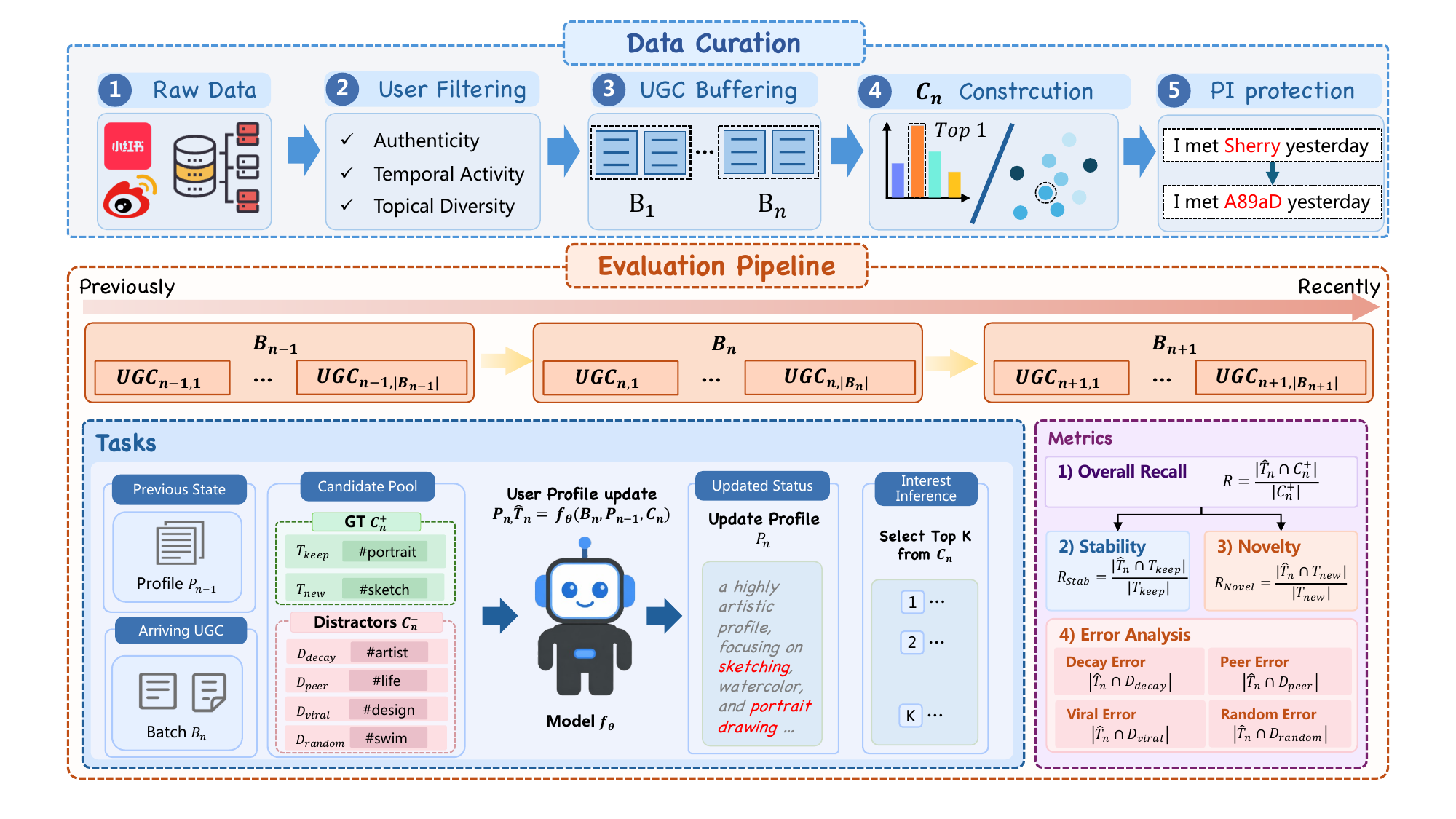}
    \caption{\textbf{Overview of StreamProfileBench.}
\textbf{Top:} The data-curation pipeline of StreamProfileBench.
\textbf{Bottom:} The Evaluation Method of StreamProfileBench.}
    \label{fig:overview}
\end{figure*}

\subsection{Task Definition}

\paragraph{General Formalization} 
Traditional user profiling is modeled as a stateless mapping. Given a user's UGC history $\mathcal{B}$, the model $f_\theta$ produce a static profile as follows:
$$\mathcal{P} = f_\theta(\mathcal{B})$$

This formulation ignores two fundamental properties of real-world UGC as mentioned in introduction. UGC in real-world arrives as an unbounded stream and user interests shift over time. We therefore formalize streaming user profiling as a \textbf{state maintenance problem}: both unbounded and non-stationarity properties demand a maintaining task form, giving rise to a read–update–write loop that evolves the profile with each incoming batch.

Formally, at each time step $n$, the model receives a new batch of UGC $\mathcal{B}_n$ from the current time window along with the user profile $\mathcal{P}_{n-1}$ carried over from the previous step. Using $\mathcal{P}_{n-1}$ as contextual prior, the model jointly reasons over the accumulated profile and the current observation $\mathcal{B}_n$ to produce an updated profile $\mathcal{P}_n$, transforming user profiling into an incremental, temporally-aware process:
$$\mathcal{P}_n = f_\theta(\mathcal{B}_n, \mathcal{P}_{n-1})$$

 \paragraph{Evaluation Formalization}   At each step, the model is required to produces a free-form natural-language profile $P_n$ that can encode any information it deems relevant, serving as its persistent memory of the user. This free-form design avoids imposing structural constraints that could interfere with evaluation. However, no reliable automatic metric exists for directly assessing $P_n$, and large-scale human annotation is prohibitively costly.

  We therefore complement $P_n$ with a structured prediction target focused on \emph{interest profiling}, which is a core dimension in streaming scenarios with clear temporal dynamics. Leveraging the temporal coherence between user interests and future UGC, we extract a ground-truth interest anchor set $T_n = \{t_1, t_2, \dots\}$ from the future batch, where each $t_i$ is a user-authored marker that explicitly manifests a proactive interest. The platform-specific extraction rules are in Appendix~\ref{app:interest_anchors}. The model is asked to infer which anchors reflect the user's future interests. by selecting from a candidate pool $C_n$ that mixes ground-truth tags in $T_n$ with distractors.

  Therefore, the evaluation task is formalized as: 
\begin{equation*}
\underbrace{(P_n,\hat{T}_n)}_{\text{updated state\& prediction}}
= f_\theta\bigl(\underbrace{P_{n-1}}_{\text{prior state}},\,
\mathcal{B}_n,\, C_n\bigr)
\end{equation*}

where each batch $\mathcal{B}_n$ is the UGC of user activities at time $n$. The candidate pool $C_n$ is partitioned into positives ($C_n^+$) and negatives ($C_n^-$). Ground truth constitutes 25\% of the pool, and the model is instructed to select exactly $|C_n^+|$ tags from $C_n$. The specific components and their definitions and evaluation targets are summarized in Table~\ref{tab:pool}. Further details are provided in Appendix~\ref{app:candidate_pool}.

\begin{table}[h]
\centering
\small
\begin{tabular}{llll}
\toprule
\textbf{Set} & \textbf{Comp.} & \textbf{Definition} & \textbf{Evaluation Target} \\
\midrule
\multirow{2}{*}{$C_n^+$} 
& $T_{\text{keep}}$ & $T_{1:n-1} \cap T_{n}$ & Interest retention \\
& $T_{\text{new}}$  & $T_{n} \setminus T_{1:n-1}$ & Emerging inference \\
\midrule
\multirow{4}{*}{$C_n^-$} 
& $D_{\text{decay}}$  & $T_{1:n-1} \setminus T_{n}$ & Decay sensitivity \\
& $D_{\text{peer}}$   & Same-cluster tags & Semantic distinction \\
& $D_{\text{viral}}$  & Trending tags & Popularity bias \\
& $D_{\text{random}}$ & Random tags & Baseline noise \\
\bottomrule
\end{tabular}
\caption{Composition of $C_n$. $C_n^+$ contains positive targets, while $C_n^-$ contains distractors derived from user history or platform trends.}
\label{tab:pool}
\end{table}

\subsection{Metric Definition}

\paragraph{Per-step Metrics}
Given the model's prediction $\hat{T}_n \subseteq C_n$ at step $n$, all metrics fundamentally evaluate the retrieval rate of a specific candidate subset $S \subseteq C_n$ using a unified scoring function:
\begin{equation*}
\mathcal{R}(S) = \frac{|\hat{T}_n \cap S|}{|S|}
\end{equation*}
Based on this formulation, we define three metric groups: \textbf{(I) Overall Recall} ($\mathcal{R}(C_n^+)$); \textbf{(II) Decomposed Recall} for stability ($\mathcal{R}(T_{\text{keep}})$) and novelty ($\mathcal{R}(T_{\text{new}})$); and \textbf{(III) Distractor Error Rates} ($\mathcal{R}(D_s)$) for distractor types $s \in \{\text{decay}, \text{peer}, \text{viral}, \text{random}\}$. Since the output budget $|C_n^+|$ is fixed, allocating capacity between $T_{\text{keep}}$ and $T_{\text{new}}$ creates an inherent \textit{stability-novelty trade-off}. The theoretical decomposition is provided in Appendix~\ref{app:tradeoff_analysis}.

\paragraph{Overall Metrics}
We aggregate step-level metrics $m_u^{(n)}$ via a two-level macro average and use overall recall $\bar{R}$ as main metric:
\begin{equation*}
\bar{M} = \frac{1}{|U|} \sum_{u \in U} \left( \frac{1}{T_u} \sum_{n=1}^{T_u} m_u^{(n)} \right)
\end{equation*}

Since $\bar{R}$ uniformly aggregates ground truth, it risks inflating scores for models biased toward abundant historical tags. To enforce balanced evaluation, we introduce $F_1^{NS}$ to strictly penalize this asymmetry:
\begin{equation*}
F_1^{NS} = \frac{2 \cdot \overline{\text{Recall}}_{\text{Nov}} \cdot \overline{\text{Recall}}_{\text{Stab}}}{\overline{\text{Recall}}_{\text{Nov}} + \overline{\text{Recall}}_{\text{Stab}}}
\end{equation*}

Formal proofs regarding the unbiasedness and convergence properties of these aggregates are provided in Appendix~\ref{app:metric_proofs}.

\subsection{Data Curation}
\label{sec:data_curation}
Our benchmark is built upon a large-scale UGC corpus spanning five major Chinese social platforms over a 30-day observation window, encompassing approximately 137 million posts and 17.4 million active users daily. Detailed statistics are provided in Appendix~\ref{app:data_source}. From this massive raw data, our pipeline with following components yields the final StreamProfileBench: a fully anonymized, high-quality user-centric dataset partitioned into temporally balanced batches. Comprehensive details for all stages are provided in Appendix~\ref{app:pipeline_details}.

\textbf{(I) User Filtering}: We target users exhibiting authentic self-expression, sustained activity, and topical diversity. Such users are selected via a two-stage pipeline: (1) a daily coarse filter that screens records at three progressive granularities using shared and platform-specific rules, and (2) a long-term stage that uses activity-based stratified sampling to retain users with appropriate active frequency. 

\textbf{(II) UGC Buffering}: Since naive calendar-day partitioning causes severe input-length imbalances due to drastically varying UGC volumes, we maintain a per-user chronological buffer. It emits a streaming step $\mathcal{B}_n$ only when the accumulated posts reach a platform-specific threshold, ensuring a uniform information load per batch.

\textbf{(III) Candidate Pool Construction}: We construct offline distractor sources: $D_\text{viral}$ samples filtered daily trending tags, while $D_\text{peer}$ retrieves semantic neighbors via an incrementally updated vector clustering index. To verify the validity of our ground truth, we conduct a human validation experiment which is described in Appendix~\ref{app:persona_eval}.
 
\textbf{(IV) Personal Information Protection}: Following standard regulations, we anonymize ten PI categories using salted hashes for identifiers and an LLM-based span detection pipeline for text, validated by manual review. Detailed information is provided in Appendix~\ref{app:pi_protection}. 

Since future anchors serve as self-verifying ground truth, their quality directly determines evaluation validity. Therefore, we also conduct a human validation experiment beyond the rule-based blacklist filtering in our pipeline. As shown in Table~\ref{tab:anchor_validation}, the overall pass rate reaches 94.7\%, confirming that the vast majority of anchors are legitimate interest signals.

\subsection{Benchmark Overview}

Table~\ref{tab:benchmark_stats} presents the overall statistics of StreamProfileBench, where 5,661 \textit{Multi} users, contribute 15,196 total prediction steps across five platforms.  Figure~\ref{fig:batch_dist} reports the distribution of batch counts per \textit{Multi} user, with the majority falling in the moderate range, capturing meaningful interest drift in observed time window. The cross-platform distribution is not artificially balanced, naturally reflecting raw data volumes and filtering yields.  
\begin{figure}[h]
    \centering
    \includegraphics[width=0.9\linewidth]{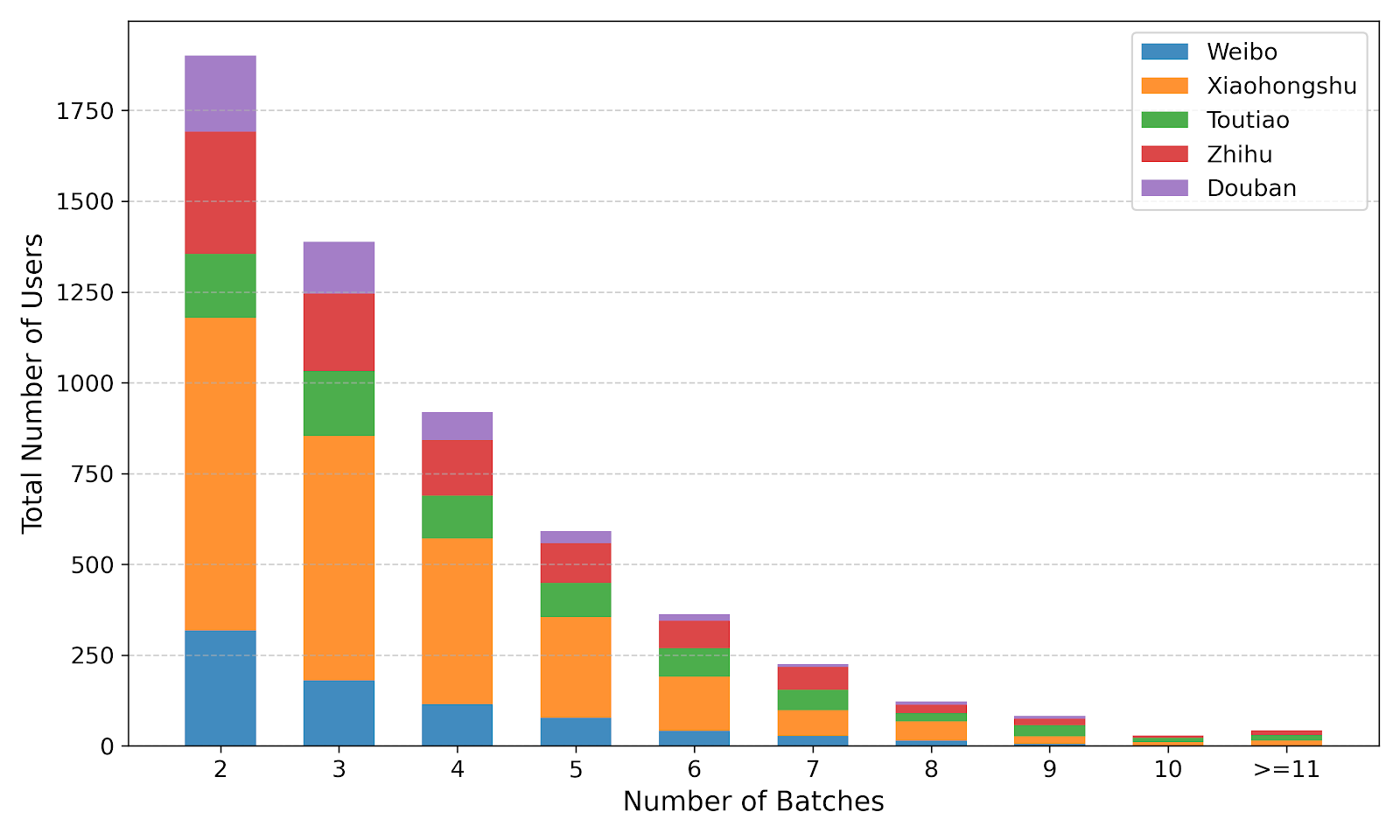}
    \caption{Distribution of batch counts per Multi user across platforms. Most users produce 2--10 streaming steps within the 30-day window.}
    \label{fig:batch_dist}
\end{figure}

We further find that benchmarked users drift far more at the anchor level
than at the coarse topic level, matching the fine-grained evaluation target of
StreamProfileBench. See Appendix~\ref{app:coarse_stability} for more details. Appendix~\ref{app:bench_example}
provides an example of one-step problem in the benchmark.

\begin{table}[h]
\centering
\small
\begin{tabular}{lrrrr}
\toprule
\textbf{Platform} & \textbf{Full} & \textbf{Multi} & \textbf{Multi\%} & \textbf{Steps} \\
\midrule
Weibo & 1,186 & 779 & 65.7\% & 1,862 \\
XHS & 3,183 & 2,585 & 81.2\% & 6,625 \\
Toutiao & 948 & 784 & 82.7\% & 2,695 \\
Zhihu & 1,414 & 1,012 & 71.6\% & 2,912 \\
Douban & 720 & 501 & 69.6\% & 1,102 \\
\midrule
\textbf{Total} & \textbf{7,451} & \textbf{5,661} & \textbf{76.0\%} & \textbf{15,196} \\
\bottomrule
\end{tabular}
\caption{Cross-platform user scale and temporal statistics of StreamProfileBench. \textit{Full}: users with completed timelines. \textit{Multi}: users with $\geq$2 steps. \textit{Steps}: total steps. XHS denotes Xiaohongshu.} 
\label{tab:benchmark_stats}
\end{table}


%% file: Section/Experiment.tex
\section{Experiment}
\subsection{Experiment Setting}

We evaluate a diverse set of 14 LLMs, covering both closed-source and open-source models across a wide range of parameter scales. Closed-source models include GPT-4o-mini \citep{hurst2024gpt}, GPT-5-mini, GPT-5.1 \citep{singh2025openai}, and Gemini-3-Flash~\citep{deepmind2025gemini3flash}. Open-source models include MiniMax-M2.5~\citep{minimax2025m25}, GLM-4.7~\citep{zaiorg2025glm47}, DeepSeek-v3.2~\citep{liu2025deepseek}, Llama-3.1-8B/70B-Instruct~\citep{grattafiori2024llama}, Qwen3-8B/14B/32B~\citep{yang2025qwen3} and GPT-oss-20B/120B~\citep{agarwal2025gpt}. More details can be found in Appendix~\ref{app:exp_config},~\ref{app:prompt_template} and~\ref{app:answer_extraction}.

\subsection{Main Result}
 Table~\ref{tab:main_results} presents the main results of all evaluated models on StreamProfileBench. We organize our findings as follows:

\begin{table*}[t]
\centering
\small
\setlength{\tabcolsep}{4pt}
\begin{tabular}{l|rrrrrr|rrrrrr}
\toprule
& \multicolumn{6}{c|}{\textbf{Average Recall ($\bar{R}$)}} & \multicolumn{6}{c}{\textbf{F1-Score of Recall ($F_1^{NS}$)}} \\
\cmidrule(lr){2-7} \cmidrule(lr){8-13}
\textbf{Model} & \textbf{Overall} & Zhihu & Weibo & Toutiao & XHS & Douban
               & \textbf{Overall} & Zhihu & Weibo & Toutiao & XHS & Douban \\
\midrule
\multicolumn{13}{l}{\textit{Closed-source Models}} \\
\midrule
GPT-4o-mini     & 33.32 & 29.97 & 34.63 & 37.42 & 34.46 & 30.13
                & 33.53 & 36.03 & 31.24 & 38.73 & 27.94 &  9.71 \\
GPT-5-mini      & 35.08 & 29.09 & 35.77 & 38.47 & 38.62 & 33.45
                & 35.44 & 35.70 & 35.97 & 36.87 & 29.02 & \underline{33.81} \\
GPT-5.1         & 38.87 & 44.52 & 38.65 & 41.50 & \underline{41.69} & 28.01
                & 42.73 & \textbf{52.63} & 41.97 & \underline{43.05} & \underline{38.75} & 28.88 \\
Gemini-3-Flash  & \textbf{52.26} & \textbf{62.95} & \textbf{46.76} & \textbf{50.09} & \textbf{48.37} & \textbf{53.15}
                & \textbf{54.97} & \underline{49.30} & \textbf{52.75} & \textbf{53.71} & \textbf{48.78} & \textbf{52.65} \\
\midrule
\multicolumn{13}{l}{\textit{Open-source Models}} \\
\midrule
MiniMax-M2.5    & 35.61 & 36.90 & 35.53 & 38.06 & 38.37 & 29.17
                & 37.90 & 41.56 & 35.79 & 39.09 & 31.26 & 26.18 \\
GLM-4.7         & 41.69 & 49.56 & \underline{40.03} & 39.49 & 40.73 & 38.65
                & 44.21 & 47.79 & \underline{43.62} & 40.24 & 35.72 & 26.10 \\
DeepSeek-v3.2   & \underline{43.05} & \underline{51.50} & 39.63 & \underline{42.65} & 40.30 & \underline{41.19}
                & \underline{44.63} & 48.47 & 40.40 & 41.44 & 33.59 & 31.12 \\
Llama-3.1-8B   & 25.18 & 17.36 & 29.36 & 30.68 & 30.81 & 17.69
                    & 18.85 & 19.43 & 21.21 & 20.29 & 14.52 & 18.80 \\
Llama-3.1-70B  & 38.08 & 42.43 & 38.07 & 38.96 & 36.36 & 34.58
                    & 40.38 & 42.76 & 40.46 & 40.80 & 32.91 & 17.55 \\
Qwen3-8B        & 30.97 & 25.04 & 34.10 & 37.52 & 36.09 & 22.10
                & 31.54 & 28.99 & 32.09 & 36.89 & 29.37 & 25.05 \\
Qwen3-14B       & 39.44 & 48.93 & 37.00 & 40.09 & 37.30 & 33.88
                & 41.33 & 48.68 & 37.25 & 39.68 & 30.37 & 28.64 \\
Qwen3-32B       & 38.26 & 45.65 & 37.26 & 40.54 & 37.37 & 30.46
                & 40.82 & 48.21 & 38.54 & 41.30 & 32.39 & 27.43 \\
GPT-oss-20B     & 34.08 & 26.66 & 34.36 & 35.73 & 35.88 & 37.75
                & 34.68 & 32.07 & 31.78 & 32.43 & 27.21 & 29.53 \\
GPT-oss-120B    & 35.25 & 31.64 & 35.76 & 37.36 & 36.57 & 34.90
                & 37.12 & 39.17 & 35.09 & 36.77 & 30.22 & 33.68 \\
\bottomrule
\end{tabular}
\caption{Main results on StreamProfileBench across five Chinese social media platforms.
We report the macro-averaged overall Recall ($\bar{R}$) and the novelty-stability
balance score ($F_1^{NS}$) per platform. \textbf{Best} and \underline{second-best} results are highlighted.}
\label{tab:main_results}
\end{table*}

\textbf{Model Capabilities: Open-Source Competitiveness and Scaling Limits.} While closed-source models like Gemini-3-Flash achieve peak performance, state-of-the-art open-source architectures effectively close the gap, with models like DeepSeek-v3.2 and GLM-4.7 even surpassing GPT-5.1 on overall recall. Furthermore, while increasing model size generally yields initial benefits, these gains diminish at larger scales. For instance, Qwen3-32B performs comparably to Qwen3-14B, suggesting that beyond a certain threshold, raw parameter count is no longer the primary bottleneck for complex, continuous profiling tasks.

\textbf{$\bar{R}$ vs.\ $F_1^{NS}$: Coverage vs. Balance.} 
Rankings of models like GPT-5.1 and DeepSeek-v3.2 under $\bar{R}$ and $F_1^{NS}$ are not aligned. These divergences confirm that $\bar{R}$ rewards overall coverage while $F_1^{NS}$ penalizes stability-novelty imbalance; reporting both provides a multi-dimensional perspective on model performance. Gemini-3-Flash is the only model that leads on both metrics, showing that high overall recall and balanced allocation are not mutually exclusive.

\textbf{Cross-Platform Variance: No Universal Winner.} Models exhibit pronounced specialization across platforms. Gemini-3-Flash leads on Weibo, Toutiao, and Xiaohongshu but underperforms on Zhihu and Douban. This divergence highlights the inherent differences in user demographics, community cultures and diverse UGC formats across ecosystems, showing that streaming user profiling is not a monolithic capability but a composition of platform-specific competencies.

\textbf{Non-LLM Baselines: Genuine Reasoning Beyond Pattern Matching.} To verify that LLMs perform meaningful profile reasoning beyond simple pattern matching, we design two groups of non-LLM baselines with increasing information access. As Table~\ref{tab:nonllm_baselines} shows, all non-LLM baselines fall short of LLM method. This confirms that LLMs contribute genuine profile reasoning beyond lexical or embedding-based pattern matching.

\begin{table}[h]
\centering
\small
\setlength{\tabcolsep}{4pt}
\begin{tabular}{lrrrr}
\toprule
\textbf{Method} & \textbf{Recall} & \textbf{R$_{\text{Nov}}$} & \textbf{R$_{\text{Stab}}$} & \textbf{F$_1^{NS}$} \\
\midrule
Random            & 23.7 & 23.7 & 23.5 & 14.1 \\
Last-Step Copy    & 19.8 & 4.9  & 86.3 & 8.3  \\
Hist. Frequency   & 22.0 & 7.4  & 87.8 & 14.0 \\
Recency-Weighted  & 21.5 & 6.7  & 88.1 & 12.6 \\
TF-IDF            & 29.4 & 17.2 & \textbf{89.2} & 29.0 \\
Embedding Sim.    & 33.6 & 26.6 & 69.2 & 31.7 \\
\midrule
DeepSeek-v3.2     & \textbf{43.1} & \textbf{33.6} & 67.6 & \textbf{44.6} \\
\bottomrule
\end{tabular}
\caption{Non-LLM baselines vs. a representative LLM. History-signal baselines use only tag metadata; content-matching baselines read post text without LLM reasoning.}
\label{tab:nonllm_baselines}
\end{table}

Besides, since the free-form persona $P_n$ is generated but not directly scored, we conduct a human evaluation of final-step personas along five dimensions on three models spanning different capability tiers. Details can be found in Appendix~\ref{app:persona_eval}.

\subsection{Stability-Novelty Trade-off}

To understand the inherent factors influencing model performance, we decompose the overall recall into its two constituent dimensions: $\text{Recall}_{\text{Stability}}$ and $\text{Recall}_{\text{Novelty}}$. Table~\ref{tab:tradeoff} presents the overall breakdown of both metrics across all models. 

\begin{table*}[t]                                                                                  
  \centering                                                                                         
  \small                                                                                             
  \setlength{\tabcolsep}{4pt}
  \begin{tabular}{l|rrrrrr|rrrrrr}                                                                   
  \toprule        
  & \multicolumn{6}{c|}{\textbf{Recall$_{\text{Stability}}$}} &                                      
  \multicolumn{6}{c}{\textbf{Recall$_{\text{Novelty}}$}} \\                                          
  \cmidrule(lr){2-7} \cmidrule(lr){8-13}
  \textbf{Model} & \textbf{Overall} & Zhihu & Weibo & Toutiao & XHS & Douban & \textbf{Overall} & Zhihu &
  Weibo & Toutiao & XHS & Douban \\
  \midrule
  \multicolumn{13}{l}{\textit{Closed-source Models}} \\
  \midrule
  GPT-4o-mini      & 67.28 & 70.84 & 87.99 & 83.24 & 80.15 & 14.17 & 23.34 & 25.19 & 19.66 & 23.73 &
  17.32 & 30.81 \\
  GPT-5-mini       & \textbf{82.27} & \underline{87.93} & 90.03 & \textbf{90.77} & \textbf{89.65} &
  \underline{52.99} & 23.18 & 21.93 & 21.19 & 22.54 & 18.46 & 31.79 \\
  GPT-5.1          & 71.21 & 66.96 & 90.47 & 83.18 & 85.48 & 29.95 & 30.53 & 43.34 & 27.32 & 29.04 &
  25.06 & 27.88 \\
  Gemini-3-Flash   & 70.17 & 40.95 & 83.89 & 83.59 & 86.32 & \textbf{56.09} & \textbf{45.83} &
  \textbf{65.44} & \underline{37.87} & \textbf{39.72} & 33.67 & \textbf{52.44} \\
  \midrule
  \multicolumn{13}{l}{\textit{Open-source Models}} \\
  \midrule
  MiniMax-M2.5        & 67.49 & 48.81 & 89.69 & \underline{87.26} & 87.83 & 23.88 & 35.61 & 36.90 &
  35.53 & 38.06 & \underline{38.37} & 29.17 \\
  GLM-4.7             & 65.99 & 45.98 & \textbf{91.57} & 85.12 & 87.69 & 19.61 & \underline{41.69} &
  49.56 & \textbf{40.03} & \underline{39.49} & \textbf{40.73} & 38.65 \\
  DeepSeek-v3.2       & 67.60 & 46.15 & \underline{90.86} & 87.24 & \underline{88.89} & 24.86 & 33.49
   & \underline{51.46} & 25.98 & 27.68 & 20.72 & \underline{41.59} \\
  Llama-3.1-8B   & 70.94 & 73.25 & 87.25 & 87.24 & 86.39 & 20.59 & 12.00 & 11.20 & 12.07 & 11.48
   & 7.93 & 17.30 \\
  Llama-3.1-70B  & 58.84 & 42.51 & 81.29 & 78.83 & 79.90 & 11.66 & 30.74 & 43.00 & 26.93 & 27.53
   & 20.72 & 35.51 \\
  Qwen3-8B            & \underline{74.00} & \textbf{88.43} & 84.83 & 83.79 & 83.04 & 29.90 & 20.04 &
  17.34 & 19.79 & 23.66 & 17.84 & 21.56 \\
  Qwen3-14B           & 65.84 & 49.25 & 85.49 & 85.34 & 84.54 & 24.59 & 30.12 & 48.13 & 23.81 & 25.85
   & 18.51 & 34.30 \\
  Qwen3-32B           & 65.95 & 51.78 & 84.94 & 84.60 & 83.38 & 25.06 & 29.55 & 45.11 & 24.93 & 27.32
   & 20.10 & 30.31 \\
  GPT-oss-20B         & 72.99 & 81.27 & 87.66 & 87.24 & 84.70 & 24.06 & 22.75 & 19.97 & 19.41 & 19.92
   & 16.21 & 38.22 \\
  GPT-oss-120B        & 72.25 & 73.29 & 87.71 & 83.78 & 83.31 & 33.15 & 24.98 & 26.72 & 21.93 & 23.55
   & 18.46 & 34.23 \\
  \bottomrule
  \end{tabular}
  \caption{Decomposition of recall into stability ($T_{\text{keep}}$) and novelty ($T_{\text{new}}$)
  dimensions. \textbf{Best} and \underline{second-best} results are highlighted.}
  \label{tab:tradeoff}
  \end{table*}

\textbf{Models favor established anchors over emerging signals.} Table~\ref{tab:tradeoff} reveals a consistent asymmetry: $\overline{\text{Recall}}_{\text{Stability}}$ substantially exceeds $\overline{\text{Recall}}_{\text{Novelty}}$ across all models. LLMs over-weight the past while under-reacting to the present. Instead of proactively tracking dynamic shifts, they tend to choose high-frequency historical tags. This conservative bias serves as the main challenge of models' overall performance.

\textbf{External error and internal balance are decoupled.} Following Appendix~\ref{app:tradeoff_analysis}, we decompose model behavior into an \emph{external} constraint budget and an \emph{internal} recall ratio. Figure~\ref{fig:tradeoff_axes} reveals three regimes: \textbf{Gemini-3-Flash} dominates with a high budget and balanced ratio; \textbf{GPT-5-mini} secures a comparable budget but skews heavily toward stability; and \textbf{Llama-3.1-8B} lags on both axes, indicating external noise must be addressed before internal rebalancing can yield gains.

\textbf{Navigating the trade-off for an optimal $F_1^{NS}$.} 
Achieving an optimal $F_1^{NS}$ requires mitigating distractor confusion and balancing the stability-novelty allocation. Figure~\ref{fig:constraint_lines} illustrates this geometrically: Gemini-3-Flash and GLM-4.7 sit close to their optimal tangent points. Conversely, GPT-5-mini is pushed toward the stability-dominant limit by extreme conservative bias, while Llama-3.1-8B's constraint line is severely restricted by excessive distractor errors.

\begin{figure}[t]
    \centering
    \includegraphics[width=1.0\linewidth]{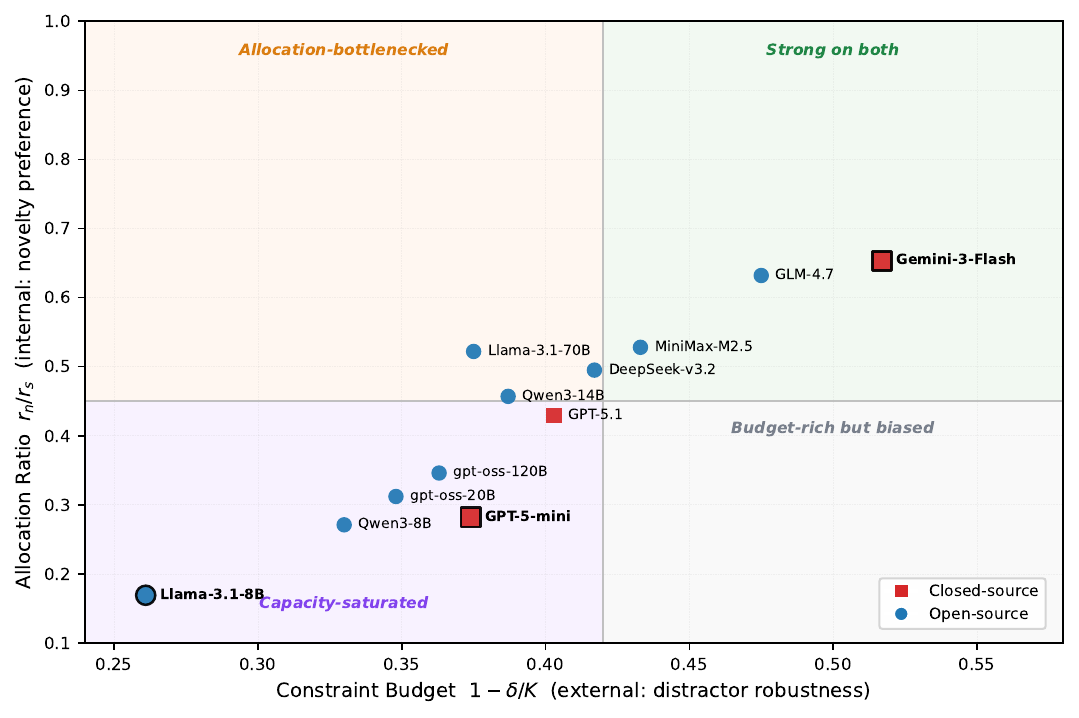}
    \caption{Two-axis decomposition of streaming profiling capability. The $x$-axis is the external constraint budget $B = 1-\delta/K$ (distractor robustness) and the $y$-axis is the internal recall ratio $\rho = r_n/r_s$ (implicit allocation preference).}
    \label{fig:tradeoff_axes}
\end{figure}

\begin{figure}[t]
    \centering
    \includegraphics[width=1.0\linewidth]{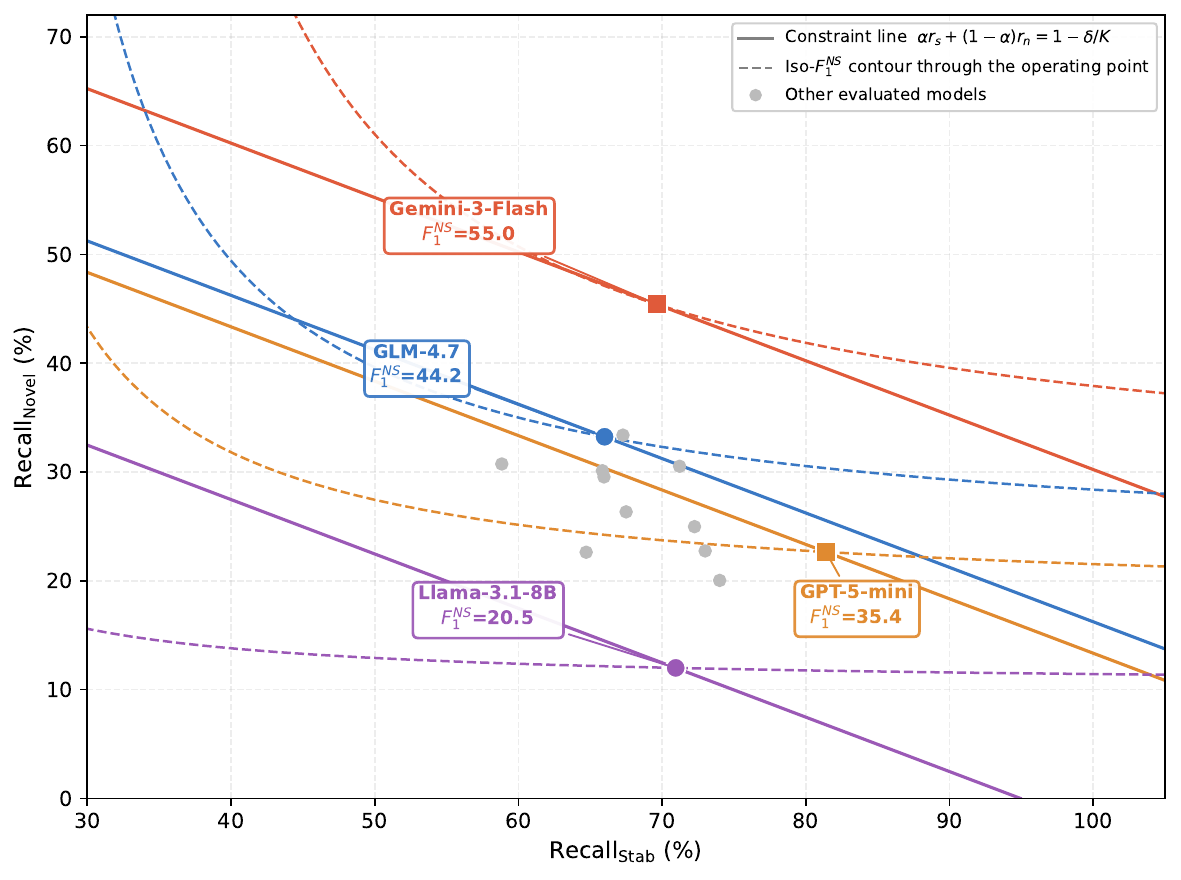}
    \caption{Constraint lines and iso-$F_1^{NS}$ contours of four representative models in the $(\mathrm{Recall}_{\mathrm{Stab}}, \mathrm{Recall}_{\mathrm{Novel}})$ plane under $\alpha=0.24$. Solid lines denote each model's constraint; dashed curves are the iso-$F_1^{NS}$ contours passing through its operating point. Other evaluated models are shown for reference.}
    \label{fig:constraint_lines}
\end{figure}

\subsection{Error Analysis}

To identify the source of distractor confusion $\delta$, we decompose it into four error rates corresponding to the four distractor types. Table~\ref{tab:error_compact} reports the platform-averaged error rates for each evaluated model; the full per-platform breakdown is deferred to Appendix~\ref{app:error_full}. Main findings are as follows:

\begin{table}[h]
\centering
\small
\setlength{\tabcolsep}{4pt}
\begin{tabular}{lrrrr}
\toprule
\textbf{Model} & $E_D$ & $E_P$ & $E_V$ & $E_R$ \\
\midrule
GPT-4o-mini      & 65.8 & 4.6 & 6.1 & 5.7 \\
GPT-5-mini       & 71.9 & 4.6 & \textbf{2.8} & \textbf{3.3} \\
GPT-5.1          & 62.6 & 6.1 & \underline{3.6} & \underline{3.5} \\
Gemini-3-Flash   & \textbf{48.0} & 9.2 & 4.6 & 5.7 \\
MiniMax-M2.5     & 65.6 & 5.5 & 3.7 & 3.8 \\
GLM-4.7          & 58.2 & 5.8 & 5.1 & 5.4 \\
DeepSeek-v3.2    & 58.3 & 6.8 & 4.5 & 5.2 \\
Llama-3.1-8B     & 71.7 & \textbf{3.2} & 4.7 & 4.5 \\
Llama-3.1-70B    & \underline{51.5} & 9.0 & 7.9 & 8.2 \\
Qwen3-8B         & 70.4 & \underline{4.5} & 5.5 & 3.9 \\
Qwen3-14B        & 58.9 & 7.1 & 5.0 & 5.2 \\
Qwen3-32B        & 60.8 & 6.9 & 5.3 & 5.7 \\
GPT-oss-20B      & 68.3 & 4.6 & 4.5 & 5.0 \\
GPT-oss-120B     & 66.1 & 6.3 & 4.8 & 4.9 \\
\bottomrule
\end{tabular}
\caption{Platform-averaged distractor error rates (\%). $E_D$: decayed interests; $E_P$: semantic peers; $E_V$: viral trends; $E_R$: random noise. Lower is better.}
\label{tab:error_compact}
\end{table}

\textbf{Decay errors dominate.} Across all models, $E_D$ is significantly larger than the other three error rates. The ranking of models by $E_D$ closely mirrors their overall $\bar{R}$ and $F_1^{NS}$ rankings. This confirms that the primary source of distractor leakage is the inability to distinguish recently active but now abandoned interests from currently sustained ones. Temporal sensitivity to interest decay is still the bottleneck for streaming user profiling. 

\textbf{Other errors are uniformly low.} $E_P$, $E_V$, and $E_R$ all stay below $10$\% for nearly every model and exhibit only weak model dependence, indicating that current LLMs handle coarse semantic discrimination and noise rejection reasonably well. 

%% file: Section/Discussion.tex
\section{Discussion}
\label{sec:discussion}

Beyond the primary benchmark rankings, we aim to validate the practical utility of the streaming paradigm and the soundness of our evaluation design. Through targeted studies across three research questions, we demonstrate that the streaming mechanism fundamentally outperforms both no-passing and uncompressed long-context baselines, and that our structured interest metrics align with human-judged persona quality. Details are presented below and in Appendix~\ref{app:ablation}.

\paragraph{RQ1: Does persona passing indeed benefit models in streaming profile inference?} 

 While persona passing is intuitively appealing for retaining accumulated knowledge, it is not guaranteed that the passed persona consistently provides \emph{useful} signals rather than accumulating \emph{noise}. Therefore, we conduct an ablation study comparing the \texttt{full\_persona} against a \texttt{no\_persona} baseline.

\begin{figure}
    \centering
    \includegraphics[width=0.85\linewidth]{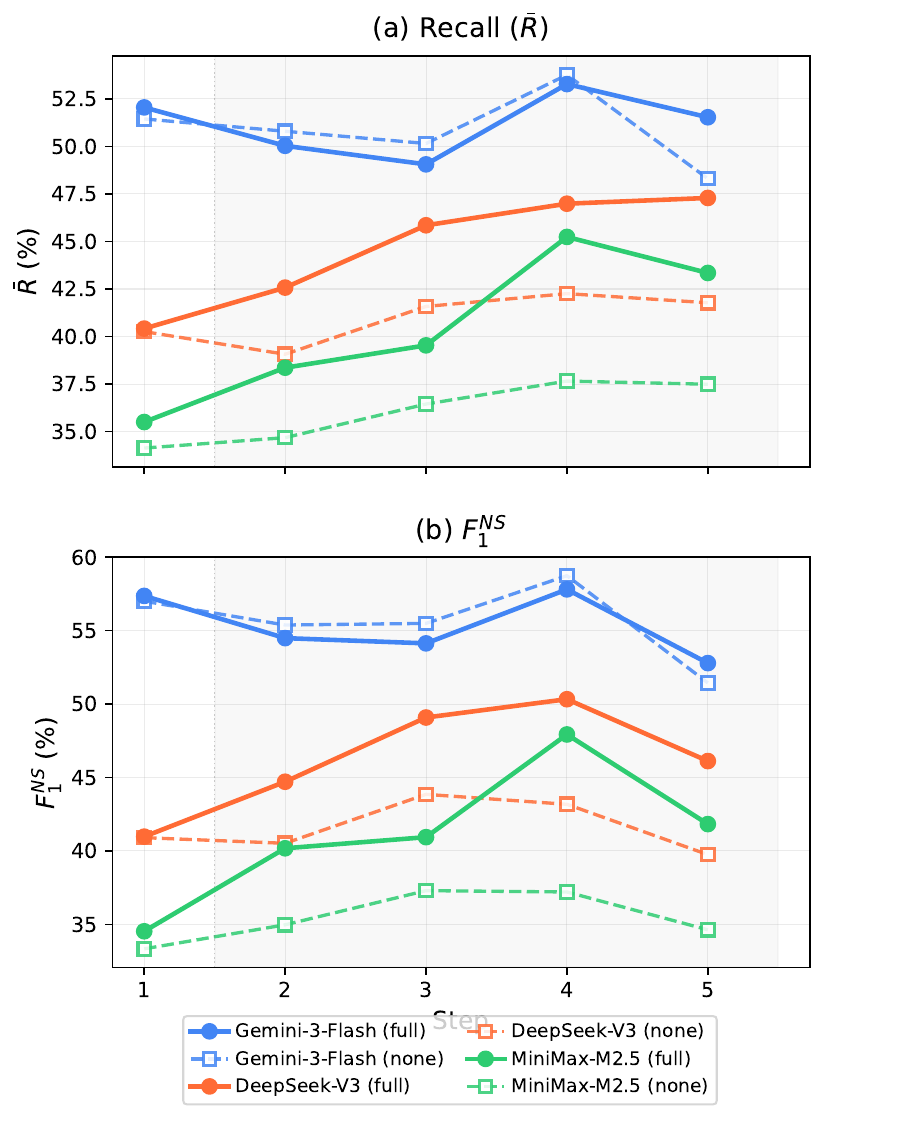}
    \caption{Step-wise performance under persona passing ablation. Solid lines: \texttt{full}; dashed lines: \texttt{none}.}
    \label{fig:persona_stepwise}
\end{figure}

Overall, \textbf{persona passing yields clear gains for moderate models, yet exposes a critical bottleneck in streaming state maintenance for frontier models.} Effective streaming profiling requires handling different input sources, namely both comprehending the current UGC context and maintaining the profile state. As Figure~\ref{fig:persona_stepwise} shows, DeepSeek-v3.2 and MiniMax-M2.5 exhibit a widening gap between \texttt{full} and \texttt{none} from Step 2 onward, relying on explicit historical guidance to boost performance. However, Gemini-3-Flash's curves overlap or invert slightly. This indicates that for leading models, comprehension of current batch is already highly proficient. Their lack of gain from persona passing highlights that even the strongest models fail to effectively maintain and fully utilize historical profile information over a continuous stream.

\paragraph{RQ2: Can massive context windows replace incremental persona           
  compression?}                                                                                      
 Streaming profiling compresses raw posts into iteratively refined persona
  representations. Since modern LLMs can accommodate a user's long-horizon post history within their
  context windows, a natural question arises: does iterative persona compression
  outperform long-context? We vary the batching granularity
   from fine to very coarse and compare against a
  \texttt{long-context} baseline that processes all posts at once without
  compression.

\begin{table}[t]
  \centering
  \small
  \begin{tabular}{l|ccc|c}
    \toprule
    & \multicolumn{3}{c|}{Streaming} & \\
    \textbf{Model} & Fine & Default & Coarse & LongCtx \\
    \midrule
    Gemini-3-Flash & \textbf{56.1} & 53.6 & 50.2 & 53.7 \\
    DeepSeek-v3.2  & \textbf{50.0} & 47.5 & 45.7 & 39.1 \\
    MiniMax-M2.5   & \textbf{43.6} & 43.1 & 38.4 & 36.2 \\
    \midrule
    Gemini-3-Flash & \textbf{52.6} & 49.5 & 47.7 & 49.9 \\
    DeepSeek-v3.2  & \textbf{47.6} & 44.4 & 44.5 & 38.9 \\
    MiniMax-M2.5   & \textbf{42.9} & 40.9 & 39.5 & 36.6 \\
    \bottomrule
  \end{tabular}
  \caption{Performance across streaming granularities vs.\ long-context baseline. Top: $F_1^{NS}$ (\%), Bottom: $\bar{R}$ (\%).}
  \label{tab:streaming_vs_longctx}
\end{table}

  As Table~\ref{tab:streaming_vs_longctx} shows, \textbf{streaming performance scales
  monotonically with iteration count across all models}. The long-context
  baseline serves as a reference point
  for when persona compression becomes worthwhile. The crossover point at which streaming surpasses long-context is model-dependent.  This reveals a consistent pattern: beyond its necessity for handling unbounded streams, streaming with sufficient    
  iterations is mechanistically superior to long-context even when all data fits    
  within the context window, where iterative refinement progressively distills signals that
   one-shot processing over raw data cannot extract.

%% file: Section/Conclusion.tex
\section{Conclusion}

In this work, we present StreamProfileBench, the first benchmark for fine-grained streaming user profiling in real-world scenarios, formalizing this challenge as a state maintenance task equipped with a temporal self-verifying mechanism. Our benchmark comprises over 120,000 UGC posts from 7,000+ real users across five diverse platforms. Evaluation of 14 LLMs reveals that models struggle with the streaming profiling task, conservatively preferring historical traits while remaining highly vulnerable to interest decay. Furthermore, ablation
  experiments validate the practical necessity of the streaming paradigm.
  Overall, this work highlights critical capability gaps of current LLMs
  in streaming profiling, establishing a foundation for more robust
  long-horizon personalized agent systems.

%% file: Section/Limitation.tex


\section*{Ethical Considerations}

We carefully consider the ethical implications of constructing and releasing this benchmark.

\begin{itemize}
    \item \textbf{Data sources.} All records are collected from publicly accessible UGC on mainstream Chinese social media platforms; no private messages, protected accounts, non-public APIs, or purchased data are involved.

    \item \textbf{Personal information protection.} The entire corpus is processed through the anonymization pipeline in Section~\ref{sec:data_curation} before release, so no directly identifiable personal information is exposed.

    \item \textbf{Intended use.} The benchmark is released under a research-only license for academic research on streaming user interest modeling; commercial profiling, targeted advertising, individual-level tracking, and any re-identification attempt are explicitly forbidden.

    \item \textbf{Representativeness and biases.} The benchmark reflects the user distribution of the five collected platforms; results should not be extrapolated to other languages, platforms, or populations without validation.
\end{itemize}

We commit to updating the released data promptly based on community feedback, including any takedown request from identifiable users.

%% file: Section/Appendix.tex
\section*{Appendix}

\label{sec:appendix}
\section{Task Definition}
\subsection{Interest Anchor Extraction}
\label{app:interest_anchors}

This section expands on the notion of \textit{interest anchors}. An interest anchor is a user-authored marker embedded in a post that signals deliberate topical intent---a hashtag, an explicit question, a bookmarking action, or any other discrete token whose presence indicates the user has consciously committed to a topic rather than mentioning it in passing. Because anchors are written by the user themselves, they provide a ground-truth signal that is both fine-grained and free of human annotation, and they form the basis of $T_\text{keep}$, $T_\text{new}$, and the distractor components $D_\text{peer}$ and $D_\text{viral}$ in the candidate pool $C_n$.

\paragraph{Anchor sources across platforms.}
Each of the five platforms surfaces anchors through a different native mechanism, reflecting differences in content format, community conventions, and UI affordances. We therefore implement a separate extractor per platform, unified by a shared post-processing cleaner (length bounds, blacklist filtering, density constraints; see Appendix~\ref{app:pipeline_details}). Table~\ref{tab:anchor_examples} lists the extraction rule, a representative example, and the accepted tag-density interval for each platform. 

\begin{table*}[h]
\centering
\small
\begin{tabular}{lp{5cm}lc}
\toprule
\textbf{Platform} & \textbf{Extraction rule} & \textbf{Example} & \textbf{Density interval} \\
\midrule
Weibo       & Double-hash topic tag \texttt{\#...\#}                                   & \zh{\#高考作文\#}          & $[0.2, 1.0]$ \\
Xiaohongshu & Single-hash topic tag \texttt{\#...}                                     & \zh{\#咖啡探店}            & $[1.0, 4.0]$ \\
Toutiao     & Single-hash topic tag \texttt{\#...}      & \zh{\#三农}                & $[0.2, 3.0]$ \\
Zhihu       & Question title $+$ TF-IDF top-5 keywords                           & \zh{``如何看待\ldots''}    & $[0.2, 2.0]$ \\
Douban      & item link $+$ action verb $\times$ domain type & \zh{看过$\cdot$《寄生虫》} & $[0.2, 2.0]$ \\
\bottomrule
\end{tabular}
\caption{Per-platform interest-anchor extraction strategies, representative examples, and the admissible tag-density interval used as a user-level sanity check.}
\label{tab:anchor_examples}
\end{table*}

\paragraph{Why these sources qualify as anchors.}
The common property of all five extraction rules is that the resulting tokens are \textit{intent-bearing}: using a hashtag on Weibo or Xiaohongshu is a voluntary act of joining a topic stream, a Zhihu question title is the explicit framing chosen by the asker, and a Douban ``watched / want-to-read'' marker is a deliberate bookmarking action tied to a concrete work. This distinguishes anchors from ordinary content tokens, which may appear by chance or as part of a quoted passage and therefore carry far noisier supervision. 

\paragraph{Normalization.}
Before being used as ground truth or entering the distractor pool, raw anchors go through a shared normalization step. The details of this cleanup pipeline, and of how the resulting anchors flow into $D_\text{viral}$ and $D_\text{peer}$, are given in Appendix~\ref{app:pipeline_details}.

\subsection{Candidate Pool Construction}
\label{app:candidate_pool}

This section expands on the candidate pool $C_n$. $C_n$ is the closed tag set from which the model must pick exactly $|C_n^+|$ items at step $n$; its composition is what turns a free-form generation task into a reproducible, annotation-free prediction problem. We therefore design $C_n$ with two goals: (i) the positive side $C_n^+$ must reflect the user's \textit{actual} next-step behaviour so that correctness is automatically decidable from future anchors, and (ii) the negative side $C_n^-$ must be structured enough to diagnose specific failure modes, not merely to lower random-guess accuracy.

\paragraph{Size and positive ratio.}
At every step we fix the positive proportion to $|C_n^+| / |C_n| = 25\%$ across all platforms. This ratio is small enough that trivial retrieval baselines cannot saturate the metric, yet large enough that the per-step metric variance stays manageable at typical $|C_n^+|$ values of 2--7. The model is instructed to output exactly $|C_n^+|$ tags, which keeps Precision and Recall identical on a per-step basis and removes the need to tune a decision threshold.

\paragraph{Positive pool.}
$C_n^+$ is drawn from the \textit{future} step $\mathcal{B}_{n+1}$ and is partitioned into two subsets according to the user's own history:
\begin{itemize}
    \item $T_\text{keep}$ collects anchors that already appeared in $\mathcal{B}_{1:n}$ and appear \emph{again} in $\mathcal{B}_{n+1}$. These test the model's ability to \textbf{consolidate long-term interests}: recognising that an established preference persists across steps.
    \item $T_\text{new}$ collects anchors that were \emph{absent} from $\mathcal{B}_{1:n}$ but appear for the first time in $\mathcal{B}_{n+1}$. These test the model's ability to \textbf{generalize to emerging interests}: inferring novel topics that are consistent with the user's underlying profile but have not yet been observed.
\end{itemize}
Because both subsets come from the user's own post stream rather than from human annotation, the ground truth is both noise-free (the user literally wrote those anchors) and cost-free (no labellers needed).

\paragraph{Negative pool.}
$C_n^-$ is a structured set of four distractor types, each probing a distinct axis of model failure:
\begin{itemize}
    \item $D_\text{decay}$ sampled from high-frequency anchors in the recent history $\mathcal{B}_{1:n}$ that \emph{do not} reappear in $\mathcal{B}_{n+1}$. Selecting from $D_\text{decay}$ indicates the model fails to detect \textbf{interest decay} and blindly projects historical topics forward.
    \item $D_\text{peer}$ sampled from the semantic cluster of a true positive tag (using the clustering index described in Appendix~\ref{app:pipeline_details}), excluding tags the user has actually engaged with. Selecting from $D_\text{peer}$ indicates the model has correctly identified the user's broad topical domain but lacks \textbf{fine-grained discrimination} within that domain.
    \item $D_\text{viral}$ sampled from the daily platform-wide trending list of the target day (coverage statistics in Appendix~\ref{app:pipeline_details}), excluding any tag related to the user's profile. Selecting from $D_\text{viral}$ indicates susceptibility to \textbf{popularity bias}---letting "what everyone is talking about today" override personal preference.
    \item $D_\text{random}$ sampled uniformly from the global tag pool. This is the weakest distractor class and serves as a baseline: a model that selects from $D_\text{random}$ has essentially no personalised signal at all.
\end{itemize}
The four types are complementary. A model that scores well on the overall Recall but incurs high error on $D_\text{peer}$, for example, has learned the user's macro topic but not their micro preferences; a model that incurs high error on $D_\text{decay}$ has learned the topics but not their temporal dynamics. Reporting per-distractor error rates therefore converts a single aggregate score into a multi-dimensional failure profile.

\paragraph{Sampling details.}
Platform-specific sampling budgets for each $D_\ast$ are tuned so that the four distractor classes together fill $|C_n^-| = 3\,|C_n^+|$ slots under the 25\% positive ratio, with approximate equal weight per class when possible (\textit{peer}, \textit{viral}, \textit{decay}, \textit{random}). When a particular class has too few eligible candidates for a given user-step (for example, $D_\text{decay}$ can be empty for a first-step user), the remaining budget is redistributed uniformly across the other classes. The offline statistics that back $D_\text{viral}$ and the clustering index that backs $D_\text{peer}$ are described in Appendix~\ref{app:pipeline_details}.

\section{Metric Property}
\subsection{Theoretical Analysis of the Stability-Novelty Trade-off}
\label{app:tradeoff_analysis}

In this appendix, we formally analyze the relationship between $\text{Recall}_{\text{Stab}}$ and $\text{Recall}_{\text{Novel}}$ defined in Section~\ref{sec:streamprofilebench}, and show that the empirical trade-off between them arises from two orthogonal sources: an external factor (distractor-induced error) and an internal factor (allocation between stability and novelty).

\subsubsection*{A.The Fundamental Constraint Identity}

Let $K = |C_n^+|$ denote the required number of selections, $\alpha = |T_{\text{keep}}|/K$ denote the proportion of sustained interests in the ground truth, and $\delta = |\hat{T}_n \cap C_n^-|$ denote the number of distractors erroneously selected. Since $|\hat{T}_n| = K$ and the prediction partitions into hits on $T_{\text{keep}}$, hits on $T_{\text{new}}$, and distractor selections, we have:
\begin{equation}
|\hat{T}_n \cap T_{\text{keep}}| + |\hat{T}_n \cap T_{\text{new}}| + \delta = K.
\end{equation}
Substituting the definitions of the two recalls yields the fundamental identity:
\begin{equation}
\alpha \cdot \text{Recall}_{\text{Stab}} + (1-\alpha) \cdot \text{Recall}_{\text{Novel}} = 1 - \frac{\delta}{K}.
\label{eq:fundamental}
\end{equation}
This defines a linear constraint in the $(\text{Recall}_{\text{Stab}}, \text{Recall}_{\text{Novel}})$ plane with slope $-\alpha/(1-\alpha)$ determined by class imbalance, and an achievable budget $B=1 - \delta/K$ determined by the model's distractor robustness.

\subsubsection*{B.External Factor: Distractor-Induced Error}

The budget $B$ bounds the weighted sum of the two recalls from above. Two immediate consequences follow:

\paragraph{Noise-free limit.} When $\delta = 0$, Equation~\eqref{eq:fundamental} becomes $\alpha \cdot \text{Recall}_{\text{Stab}} + (1-\alpha) \cdot \text{Recall}_{\text{Novel}} = 1$, and it is possible to simultaneously achieve $\text{Recall}_{\text{Stab}} = \text{Recall}_{\text{Novel}} = 1$. The two metrics are therefore \emph{not intrinsically in conflict}; any trade-off emerges only in the presence of distractor leakage.

\paragraph{Monotone relaxation.} Reducing $\delta$ uniformly shifts the constraint line outward in the positive quadrant. Every operating point achievable at noise level $\delta_1$ remains achievable at any lower level $\delta_2 < \delta_1$, and additional points become reachable. Improvements in distractor robustness translate directly into joint improvements for both recall dimensions.

The external factor therefore captures the model's semantic discrimination quality by separating ground-truth tags from the four distractor categories, and is independent of how the model allocates its remaining selection capacity.

\subsubsection*{C.Internal Factor: Stability-Novelty Allocation}

Given a fixed error level $\delta$ (and thus a fixed usable budget $B = 1 - \delta/K$), the model's operating point along the constraint line reflects its implicit allocation preference. Under the fixed selection constraint $|\hat{T}_n| = K$, each additional hit on $T_{\text{keep}}$ necessarily displaces a potential hit on $T_{\text{new}}$, and vice versa. 

To formalize this internal allocation, we can uniquely parameterize the model's position on the constraint line using the \emph{recall ratio}:
\begin{equation}
\rho = \frac{\text{Recall}_{\text{Novel}}}{\text{Recall}_{\text{Stab}}}.
\end{equation}
While theoretically $\rho \in [0, \infty)$, its achievable range is strictly bounded by the natural limits of recall metrics ($\text{Recall} \le 1$). This ratio captures the model's empirical inclination toward exploring new tags versus retaining familiar ones within the feasible domain. 

\paragraph{Extreme allocations and capacity overflow.} The endpoints of the feasible segment are governed by $\rho$, subject to truncation when the budget $B$ exceeds the capacity of a specific category:
\begin{itemize}
    \item \textbf{Stability-focused limit:} As $\rho \to 0$, the model prioritizes $T_{\text{keep}}$. However, $\text{Recall}_{\text{Stab}}$ is upper-bounded by $\min(1, B/\alpha)$. If $B > \alpha$, a \emph{capacity overflow} occurs: even after perfectly recalling all sustained interests ($\text{Recall}_{\text{Stab}} = 1$), the model must allocate its remaining budget $B-\alpha$ to novelty, establishing a strict lower bound $\rho \ge \frac{B-\alpha}{1-\alpha} > 0$.
    \item \textbf{Novelty-focused limit:} As $\rho \to \infty$, the model prioritizes $T_{\text{new}}$. Symmetrically, $\text{Recall}_{\text{Novel}}$ is upper-bounded by $\min(1, B/(1-\alpha))$. If $B > 1-\alpha$, the remaining capacity overflows into stability, establishing an upper bound on $\rho$.
\end{itemize}

\paragraph{Trade-off exchange rate.} Between the reachable endpoints, any internal reallocation along the constraint line (i.e., a shift in $\rho$) converts one recall into the other at a fixed exchange rate governed by the ground-truth distribution $\alpha$:
\begin{equation}
\Delta \text{Recall}_{\text{Novel}} = -\frac{\alpha}{1-\alpha} \cdot \Delta \text{Recall}_{\text{Stab}}.
\end{equation}
When $\alpha < 0.5$, the exchange rate $\alpha/(1-\alpha) < 1$, meaning that shifting prediction capacity from stability to novelty yields a smaller absolute loss in stability than the corresponding gain in novelty—reallocation is especially cheap on novelty-heavy platforms.

\subsubsection*{D.Decoupling the Two Factors}

The analyses above yield a clean decomposition of model behavior in the $(\text{Recall}_{\text{Stab}}, \text{Recall}_{\text{Novel}})$ plane:
\begin{itemize}
    \item The \textbf{external factor} $B$ determines which constraint line the model operates on. Increasing $B$ (by reducing distractor errors) pushes the entire line outward, expanding the achievable region for both recalls simultaneously.
    \item The \textbf{internal factor} $\rho$, captured by the model's implicit allocation preference, determines where along the constraint line the operating point lies. It redistributes capacity between the two recalls without changing the overall budget $B$.
\end{itemize}
This decomposition suggests two orthogonal paths toward stronger streaming profiling: (i) expanding the effective budget $B$ by reducing distractor confusion, which uniformly relaxes the frontier, and (ii) shifting the operating point $\rho$ toward a less extreme region to prevent undesirable capacity overflow.

\subsection{Proof of Aggregation Metric Properties}
\label{app:metric_proofs}

This section provides formal proofs that the two-level macro-averaged metric $\bar{M}$ and the novelty-stability harmonic mean $F_1^{NS}$ are robust to the heterogeneous sequence lengths $\{T_u\}_{u\in\mathcal{U}}$ in StreamProfileBench.

\paragraph{Setup.}
Let $\mathcal{U}$ be the set of evaluated users and let $T_u$ denote the number of streaming steps of user $u$. For any per-step diagnostic metric $m^{(u,t)}$, we assume that, conditional on user $u$, the values $\{m^{(u,t)}\}_{t=1}^{T_u}$ are i.i.d.\ random variables with
\begin{equation*}
    \mu_u \;:=\; \mathbb{E}\bigl[m^{(u,t)}\bigr], \qquad \forall\, t\in\{1,\dots,T_u\}.
\end{equation*}
The i.i.d.\ assumption abstracts step-to-step serial correlation and isolates the sequence-length effect; empirical violations are only second-order. Recall that the two-level \emph{macro-average} is
\begin{equation}
    \bar{M} \;=\; \frac{1}{|\mathcal{U}|}\sum_{u\in\mathcal{U}}
        \underbrace{\frac{1}{T_u}\sum_{t=1}^{T_u} m^{(u,t)}}_{=:\,\bar m_u},
    \label{eq:macro_avg_app}
\end{equation}
in contrast with the \emph{micro-average}
\begin{equation}
    \bar{M}_{\text{micro}} \;=\;
    \frac{\sum_{u\in\mathcal{U}}\sum_{t=1}^{T_u} m^{(u,t)}}
         {\sum_{u\in\mathcal{U}} T_u}.
    \label{eq:micro_avg_app}
\end{equation}

\begin{proposition}[Unbiasedness of $\bar{M}$ w.r.t.\ sequence length]
\label{prop:unbiased}
Under the i.i.d.\ assumption above,
\begin{equation*}
    \mathbb{E}\bigl[\bar{M}\bigr]
    \;=\;
    \frac{1}{|\mathcal{U}|}\sum_{u\in\mathcal{U}}\mu_u,
\end{equation*}
which is independent of $\{T_u\}_{u\in\mathcal{U}}$. In contrast, the micro-average satisfies
\begin{equation*}
    \mathbb{E}\bigl[\bar{M}_{\text{micro}}\bigr]
    \;=\;
    \sum_{u\in\mathcal{U}}\frac{T_u}{\sum_{u'\in\mathcal{U}} T_{u'}}\,\mu_u,
\end{equation*}
whose user weights scale with $T_u$, systematically biasing the score toward high-activity users.
\end{proposition}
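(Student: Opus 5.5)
The plan is to prove both identities directly from linearity of expectation. The sequence lengths $\{T_u\}_{u\in\mathcal{U}}$ are treated as fixed, non-random quantities; equivalently, we condition on them, since the benchmark's buffering fixes the number of steps per user before any model output is produced. In particular, no distributional assumption beyond the existence of the per-user means $\mu_u$ is needed. Independence within a user is not even required for the expectation statements; it would only matter for variance or convergence claims, which are not asserted in Proposition~\ref{prop:unbiased}.

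First I will handle the macro-average. For each $u$, by linearity,
\begin{equation*}
\mathbb{E}[\bar m_u] = \frac{1}{T_u}\sum_{t=1}^{T_u}\mathbb{E}[m^{(u,t)}] = \frac{1}{T_u}\cdot T_u\,\mu_u = \mu_u .
\end{equation*}
The key point is that the factor $T_u$ cancels exactly, so the per-user average is an unbiased estimate of $\mu_u$ regardless of the length of the user's stream. Applying linearity again to the outer sum in \eqref{eq:macro_avg_app} then gives $\mathbb{E}[\bar M]=|\mathcal{U}|^{-1}\sum_u \mu_u$. This expression contains no $T_u$, which is the claimed independence.

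Second I will handle the micro-average \eqref{eq:micro_avg_app}. Because the denominator $N:=\sum_{u'}T_{u'}$ is deterministic given the lengths, it can be pulled outside the expectation. This gives
\begin{equation*}
\mathbb{E}[\bar M_{\text{micro}}] = \frac{1}{N}\sum_u\sum_{t=1}^{T_u}\mu_u = \sum_u \frac{T_u}{N}\,\mu_u .
\end{equation*}
To justify the phrase ``systematically biasing'', I will rewrite the difference between the two expectations as
\begin{equation*}
\sum_u\Bigl(\frac{T_u}{N}-\frac{1}{|\mathcal{U}|}\Bigr)\mu_u
= \frac{|\mathcal{U}|}{N}\,\widehat{\mathrm{Cov}}_u(T_u,\mu_u),
\end{equation*}
where $\widehat{\mathrm{Cov}}_u$ is the empirical covariance over users. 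This difference is nonzero precisely when activity level correlates with the user-level mean, and its sign follows that correlation. This is exactly the sense in which high-activity users dominate the micro-average.

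The only genuine subtlety, and the step I expect to need care, is the status of $T_u$. If $T_u$ were random and dependent on the metric values, the denominator of the micro-average could not simply be pulled out of the expectation. I will therefore state explicitly that the result is conditional on $\{T_u\}$, which matches the benchmark's construction. The unconditional version of the macro-average claim then follows by the tower property, since its conditional expectation does not depend on the $T_u$ at all.
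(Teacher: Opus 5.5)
Your proposal is correct and follows the same route as the paper: linearity of expectation, with $T_u$ cancelling inside each per-user average $\bar m_u$ but surviving as a weight in the micro-average. Your additions are sound and make explicit what the paper leaves implicit: conditioning on $\{T_u\}$, noting that only equal per-step means (not independence) are needed, and writing the bias as $\frac{|\mathcal{U}|}{N}\widehat{\mathrm{Cov}}_u(T_u,\mu_u)$.
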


\begin{proof}
By linearity of expectation, for each user $u$,
\begin{equation*}
    \mathbb{E}\bigl[\bar m_u\bigr]
    \;=\;
    \frac{1}{T_u}\sum_{t=1}^{T_u}\mathbb{E}\bigl[m^{(u,t)}\bigr]
    \;=\;
    \frac{1}{T_u}\cdot T_u\cdot\mu_u
    \;=\;
    \mu_u,
\end{equation*}
so $T_u$ cancels exactly inside the within-user average. Taking expectation of Equation~\eqref{eq:macro_avg_app} yields
\begin{equation*}
    \mathbb{E}\bigl[\bar{M}\bigr]
    \;=\;
    \frac{1}{|\mathcal{U}|}\sum_{u\in\mathcal{U}}\mathbb{E}\bigl[\bar m_u\bigr]
    \;=\;
    \frac{1}{|\mathcal{U}|}\sum_{u\in\mathcal{U}}\mu_u,
\end{equation*}
which is $T_u$-free. The micro-average claim follows by applying the same argument to Equation~\eqref{eq:micro_avg_app} and observing that $T_u$ does \emph{not} cancel.
\end{proof}

\begin{proposition}[Consistency of $F_1^{NS}$ w.r.t.\ sequence length]
\label{prop:convergence}
Let
\begin{equation*}
    \nu_u := \mathbb{E}\bigl[\mathrm{Recall}_{\text{Novelty}}^{(u,t)}\bigr],
    \sigma_u := \mathbb{E}\bigl[\mathrm{Recall}_{\text{Stability}}^{(u,t)}\bigr],
\end{equation*}
and assume $\nu_u+\sigma_u>0$. For each user $u$, let
\begin{equation*}
    \bar{R}^{(u)}_{\text{Nov}} := \frac{1}{T_u}\sum_{t=1}^{T_u}\mathrm{Recall}_{\text{Novelty}}^{(u,t)},
\end{equation*}
\begin{equation*}
    \bar{R}^{(u)}_{\text{Stab}} := \frac{1}{T_u}\sum_{t=1}^{T_u}\mathrm{Recall}_{\text{Stability}}^{(u,t)}.
\end{equation*}
The user-level harmonic mean
\begin{equation*}
    F_1^{NS}(u)
    \;:=\;
    \frac{2\,\bar{R}^{(u)}_{\text{Nov}}\,\bar{R}^{(u)}_{\text{Stab}}}
         {\bar{R}^{(u)}_{\text{Nov}} + \bar{R}^{(u)}_{\text{Stab}}}
\end{equation*}
satisfies
\begin{equation*}
    F_1^{NS}(u) \;\xrightarrow[T_u\to\infty]{\;p\;}\;
    \frac{2\,\nu_u\,\sigma_u}{\nu_u + \sigma_u},
\end{equation*}
where the limit depends only on the user's intrinsic novelty and stability expectations $(\nu_u,\sigma_u)$ and is independent of $T_u$.
\end{proposition}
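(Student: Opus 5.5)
The plan is to combine a weak law of large numbers for each per-user average with the continuous mapping theorem applied to the harmonic-mean map. First, under the i.i.d.\ assumption of the Setup, the per-step recalls $\mathrm{Recall}_{\text{Novelty}}^{(u,t)}$ and $\mathrm{Recall}_{\text{Stability}}^{(u,t)}$ are bounded in $[0,1]$, since each is a ratio $\mathcal{R}(S)=|\hat{T}_n\cap S|/|S|$. Boundedness gives finite second moments. Chebyshev's inequality then yields the weak law directly:
\begin{equation*}
\Pr\bigl(|\bar{R}^{(u)}_{\text{Nov}}-\nu_u|>\epsilon\bigr)\le \frac{\mathrm{Var}(\mathrm{Recall}^{(u,t)}_{\text{Novelty}})}{T_u\epsilon^2}\le \frac{1}{4T_u\epsilon^2}\to 0,
\end{equation*}
and the same bound holds for $\bar{R}^{(u)}_{\text{Stab}}\to\sigma_u$. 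Both averages therefore converge in probability. Convergence of each coordinate in probability implies joint convergence of the pair $(\bar{R}^{(u)}_{\text{Nov}},\bar{R}^{(u)}_{\text{Stab}})\xrightarrow{p}(\nu_u,\sigma_u)$, via a union bound on the two deviation events.

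Second, define $h(x,y)=2xy/(x+y)$ on $[0,1]^2\setminus\{(0,0)\}$, extended by $h(0,0)=0$, which is the usual convention for $F_1$. The key point is that $h$ is continuous at $(\nu_u,\sigma_u)$. This follows because the denominator $x+y$ is continuous and equals $\nu_u+\sigma_u>0$ at that point, which is exactly the hypothesis of the statement. Hence $h$ is a ratio of continuous functions with nonvanishing denominator on a neighborhood of $(\nu_u,\sigma_u)$. The continuous mapping theorem, which needs continuity only at the limit point, then gives
\begin{equation*}
F_1^{NS}(u)=h\bigl(\bar{R}^{(u)}_{\text{Nov}},\bar{R}^{(u)}_{\text{Stab}}\bigr)\xrightarrow{p} \frac{2\nu_u\sigma_u}{\nu_u+\sigma_u}.
\end{equation*}
The limit is a function of $(\nu_u,\sigma_u)$ alone. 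Because these are expectations of identically distributed per-step quantities, they do not depend on $T_u$, which is the claimed length-independence.

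The only delicate step is the possible degeneracy of the denominator. At finite $T_u$ the empirical sum $\bar{R}^{(u)}_{\text{Nov}}+\bar{R}^{(u)}_{\text{Stab}}$ may equal $0$ with positive probability, so $F_1^{NS}(u)$ needs the convention above. I would handle this by noting that
\begin{equation*}
\Pr\bigl(\bar{R}^{(u)}_{\text{Nov}}+\bar{R}^{(u)}_{\text{Stab}}<(\nu_u+\sigma_u)/2\bigr)\to 0
\end{equation*}
by the first step, so the convention affects only an event of vanishing probability.

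A second minor point is that $T_{\text{keep}}$ or $T_{\text{new}}$ may be empty at some steps, leaving the recall undefined there. I would assume, as the Setup implicitly does, that such steps are excluded or assigned a fixed convention, so that the i.i.d.\ bounded-variable framework still applies.
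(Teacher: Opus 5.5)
Your proposal is correct and takes the same route as the paper: the weak law of large numbers for each per-user average, followed by the continuous mapping theorem applied to $h(x,y)=2xy/(x+y)$, which is continuous wherever $x+y>0$. Your additions fill in details the paper leaves implicit but do not change the argument: deriving the weak law via Chebyshev from the $[0,1]$ bound, making joint convergence explicit, and handling the possibly vanishing empirical denominator at finite $T_u$.
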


\begin{proof}
By Proposition~\ref{prop:unbiased}, $\mathbb{E}[\bar{R}^{(u)}_{\text{Nov}}]=\nu_u$ and $\mathbb{E}[\bar{R}^{(u)}_{\text{Stab}}]=\sigma_u$ for every finite $T_u$. Moreover, as $T_u\to\infty$, the weak law of large numbers gives
\begin{equation*}
    \bar{R}^{(u)}_{\text{Nov}} \;\xrightarrow{p}\; \nu_u,
    \qquad
    \bar{R}^{(u)}_{\text{Stab}} \;\xrightarrow{p}\; \sigma_u.
\end{equation*}
The harmonic-mean map $h(x,y)=2xy/(x+y)$ is continuous on $\{(x,y)\in[0,1]^2 : x+y>0\}$. Applying the continuous mapping theorem,
\begin{equation*}
    h\!\bigl(\bar{R}^{(u)}_{\text{Nov}},\,\bar{R}^{(u)}_{\text{Stab}}\bigr)
    \;\xrightarrow{p}\;
    h(\nu_u,\sigma_u)
    \;=\;
    \frac{2\nu_u\sigma_u}{\nu_u+\sigma_u},
\end{equation*}
as claimed. Macro-averaging $F_1^{NS}(u)$ across users therefore produces an aggregate that, in the limit, depends only on the user-level abilities $\{(\nu_u,\sigma_u)\}$ and is free of systematic shifts driven by $\{T_u\}$.
\end{proof}

\paragraph{Remark.}
Proposition~\ref{prop:unbiased} guarantees that the expectation of $\bar{M}$ is decoupled from $\{T_u\}$ at \emph{every} finite $T_u\ge 1$, so two-level macro-averaging removes first-order sampling bias exactly. Proposition~\ref{prop:convergence} is the natural counterpart for $F_1^{NS}$: because the harmonic mean is a \emph{nonlinear} transformation of two averaged quantities, exact unbiasedness no longer holds at finite $T_u$, but consistency suffices to ensure that cross-user comparisons are asymptotically free of sequence-length artifacts. Together the two results justify reporting $\bar{M}$ and $F_1^{NS}$ as the principal benchmark metrics despite the substantial variation in $T_u$ observed across platforms (cf.\ Table~\ref{tab:benchmark_stats}).

\section{Benchmark Details}
\subsection{Data Source and Normalized Schema}
\label{app:data_source}

\paragraph{A.Platforms and collection window}
The raw corpus was collected over a 30-day window from 2025-06-04 to 2025-07-03 and covers five of the most active Chinese social platforms: Sina Weibo, Xiaohongshu, Toutiao, Zhihu, and Douban. Together the five platforms span short-form microblogging, image-text lifestyle notes, news feeds, long-form Q\&A, and cultural-consumption check-ins, providing a heterogeneous cross-section of Chinese UGC. Table~\ref{tab:raw_scale} gives a single-day snapshot (2025-06-14) of the raw scale, with the five platforms jointly contributing roughly 137M posts and 17.4M active users per day.

\begin{table*}[h]
\centering
\small
\begin{tabular}{lrrl}
\toprule
\textbf{Platform} & \textbf{Daily active users} & \textbf{Daily UGC posts} & \textbf{Post types} \\
\midrule
Sina Weibo  &  9{,}900K & 114{,}857K & Posts / reposts / comments \\
Xiaohongshu &  5{,}439K &   6{,}625K & Image-text notes / videos \\
Toutiao     &  1{,}595K &  13{,}866K & Articles / micro-posts / comments \\
Douban      &    339K &   1{,}086K & Broadcasts / reviews / diaries \\
Zhihu       &    137K &     334K & Answers / thoughts / articles \\
\midrule
\textbf{Total} & \textbf{17{,}410K} & \textbf{136{,}768K} & --- \\
\bottomrule
\end{tabular}
\caption{Raw cross-platform data scale (single-day snapshot). Orders of magnitude vary by almost two decades across platforms, motivating the platform-adapted filtering and sampling strategies described in the remainder of this appendix.}
\label{tab:raw_scale}
\end{table*}

\paragraph{B.Normalization across platforms}
Each platform exposes data in its own native format. Directly operating on these heterogeneous records would require platform-specific logic at every stage of the pipeline. We therefore abstract all five sources into a unified two-level schema before any downstream processing:
\begin{itemize}
    \item a \textbf{user metadata} file that holds stable, user-level attributes such as identifiers, bio, geolocation, and follower counts (Table~\ref{tab:schema_user});
    \item a \textbf{user-generated content (UGC)} file that holds the post stream, with each post projected onto five unified text fields regardless of its native form (Table~\ref{tab:schema_ugc}).
\end{itemize}
From this point on, the filtering, buffering, and anchor-structuring routines operate on the normalized schema, and any platform-specific handling is confined to the field-level adapters invoked when populating the two tables.

\paragraph{C.User metadata schema}
The user-level table is the stable backbone that links all batches of the same user. It combines four kinds of fields: (i) identity keys (\texttt{user\_id}, \texttt{username}) used for cross-batch joining and later replaced by salted hashes in the released version, (ii) self-disclosure fields (\texttt{bio}, \texttt{gender}, \texttt{location}) that carry high-density identity signal, (iii) influence and activity counters (\texttt{followers\_count}, \texttt{following\_count}, \texttt{posts\_count}) that help distinguish ordinary users from KOLs and marketing accounts, and (iv) platform-assigned roles (\texttt{verified\_type}). The exact field list is given in Table~\ref{tab:schema_user}.

\begin{table*}[h]
\centering
\small
\begin{tabular}{lp{8.5cm}}
\toprule
\textbf{Field} & \textbf{Description \& profiling usage} \\
\midrule
\texttt{user\_id} & Unique primary key linking all batches of the same user; foundation of longitudinal construction. \\
\texttt{username / display\_name} & Public-facing handle; often encodes group affiliation or interest signals. \\
\texttt{gender} & One of the core demographic attributes. \\
\texttt{bio} & High-density self-disclosure text; directly reveals identity and values. \\
\texttt{location / ip\_location} & Registered or IP-inferred region; supports cultural-segment and offline-trajectory inference. \\
\texttt{followers\_count} & Reach / audience size; distinguishes KOLs from ordinary users. \\
\texttt{following\_count} & Information breadth the user actively consumes. \\
\texttt{posts\_count} & Total posting activity and expressive willingness on the platform. \\
\texttt{verified\_type} & Platform-assigned verification class (e.g.\ yellow-V, enterprise blue-V) used as an authority / domain hint. \\
\bottomrule
\end{tabular}
\caption{Normalized user metadata schema. Identity-revealing fields (\texttt{user\_id}, \texttt{username}) are hashed in the released version under the anonymization pipeline described in Section~\ref{sec:streamprofilebench}.}
\label{tab:schema_user}
\end{table*}

\paragraph{D.UGC schema}

The post-level table projects the native record of each platform onto five unified text fields: \textit{title}, \textit{content}, \textit{media\_text}, and \textit{quote\_content}, plus the usual \textit{post\_id}, \textit{user\_id}, and timestamp. Some fields are empty on some platforms (for example, Weibo posts have no standalone title; Douban tagging rows have no media text and are instead bound to a concrete book/film/music item), but the schema is deliberately a superset that preserves every information channel any platform offers. Table~\ref{tab:schema_ugc} maps each text field to its platform-specific source, which is the contract the adapters fulfill when ingesting native records.

\begin{table*}[h]
\centering
\small
\begin{tabular}{lllll}
\toprule
\textbf{Platform} & \textbf{Title} & \textbf{Content}  & \textbf{Quote\_content} \\
\midrule
Weibo       & --- & Post body / comment &  Quoted original / replied comment \\
Toutiao     & Article / micro-post title & Comment / news body & Quoted news / reply context \\
Xiaohongshu & Note title & Note body / comment  & Quoted note / replied comment \\
Zhihu       & Question title & Answer / comment body& Question detail / replied answer \\

Douban  & User broadcast / status update & Short review / marking state& (none; directly bound to the item) \\
\bottomrule
\end{tabular}
\caption{Normalized UGC schema and its platform-specific field mapping. The schema is a superset of all platforms' native records; empty cells indicate fields the source platform does not expose.}
\label{tab:schema_ugc}
\end{table*}

Once every raw record has been projected onto the tuple defined by Tables~\ref{tab:schema_user} and~\ref{tab:schema_ugc}, the rest of the curation pipeline becomes platform-agnostic. The remainder of this appendix (Sections~\ref{app:pipeline_details} onward) describes that platform-agnostic pipeline in detail.

\subsection{Data Curation Pipeline}
\label{app:pipeline_details}

Given the normalized two-level records from Section~\ref{app:data_source}, the curation pipeline turns the raw stream of roughly 137M posts per day into a clean, user-centric, information-balanced benchmark. It is organized into three stages, each addressing a distinct failure mode of naive collection. The rest of this appendix describes each stage in detail.

\subsubsection*{A.Coarse Daily Filtering}

The first stage is a fast rule-based pass that runs daily and enforces hard rules at three progressively finer granularities. At each granularity, a cross-platform rule set is applied first, followed by per-platform adapters that target each ecosystem's unique noise patterns.

\begin{itemize}
    \item \textbf{User level.} Shared rules reject accounts by daily post-count thresholds, exact and fuzzy duplication rates, and burst detection within short time windows. Platform adapters then target account families specific to each ecosystem: commerce accounts on Xiaohongshu, corporate matrix accounts on Toutiao, paid-consulting lead-generation on Zhihu, water-army rings on Douban, and coordinated comment farms on Weibo.
    \item \textbf{Content level.} A shared keyword blacklist, format-anomaly detector, and Gzip-ratio entropy estimator remove templated or low-information text. Each platform also imposes a minimum post length (e.g. $\geq 50$ characters on Zhihu, $\geq 20$ on Toutiao, $\geq 10$ on Douban) to discard content too short to carry interest signal.
    \item \textbf{Anchor level.} This is the strictest granularity, since downstream evaluation uses anchors as atomic labels. Anchor extraction itself is platform-specific. A shared cleaner then drops pure-numeric or pure-symbol tags, enforces length bounds of $[2, 30]$ characters, removes platform-specific artifacts, and screens tags against the dynamic trending blacklist $\mathcal{L}_\text{black}$(only for weibo) described in part D. Finally, a per-platform constraint on tag density (\# valid anchors / \# interacting posts) rejects both near-zero density (users with no usable anchors) and runaway density (tag-spam accounts).
\end{itemize}

\subsubsection*{B.Longitudinal Fine Filtering}

After a warm-up period of $W=10$ days, a slower pass checks longitudinal quality over the accumulated history. This stage has three sub-steps.

\begin{itemize}
    \item \textbf{Active-day floor and stratified sampling.} The system first counts the number of days each user produced valid content. Roughly 40\% of users post on only one day and carry no longitudinal signal, so a minimum threshold of three active days is enforced. Users above this threshold are then stratified-sampled with platform-adapted ratios to flatten the activity distribution:
        \begin{itemize}
            \item \textit{Short-form feeds (Weibo, Toutiao)}: sample 50\% from the 3--5-day core and cut 6--7-day users to 15--20\% to suppress marketing-account dominance.
            \item \textit{Image-text communities (Xiaohongshu, Douban)}: sample 40\% from the 3--5-day core and drop the 7-day-and-above tail to 10\% to match their lower daily cadence.
            \item \textit{Long-form creative (Zhihu)}: shift the core to 2--4 days (30--70\%) and cut tails at 5 days, reflecting the naturally lower posting frequency of long-form writers.
        \end{itemize}

    \item \textbf{Personality audit.} A rule-based pass first counts first-person pronouns, sentiment vocabulary, and subjective markers to filter content-reposting accounts. Surviving candidates are then passed to a capable LLM as a zero-shot ``digital-persona auditor'' that judges each full stream for cognitive depth and subjectivity. The prompt used for this audit is provided below.
\end{itemize}

\begin{table*}[t] 
\centering
\begin{tcolorbox}[
    colframe = gray!80!black,
    colback = gray!5!white,
    coltitle = white,
    coltext = black,
    fonttitle = \bfseries\large,
    title = {LLM-as-a-Judge: Golden-User Qualification Prompt},
    boxrule = 0.8pt,
    arc = 1mm,
    width = \textwidth, 
    left = 10pt, right = 10pt, top = 8pt, bottom = 8pt
]
\footnotesize 
\setlist[itemize]{noitemsep, topsep=2pt, parsep=1pt, partopsep=0pt, leftmargin=15pt}
\setlist[enumerate]{noitemsep, topsep=2pt, parsep=1pt, partopsep=0pt, leftmargin=15pt}

\textbf{\# Role}\\
You are a ``Digital Persona Evaluation Expert.'' Your task is to screen social-media users along three dimensions: \textbf{cognitive depth}, \textbf{expressive subjectivity}, and \textbf{information entropy}.

\smallskip
\textbf{\# Classification Categories}

\textbf{\checkmark\ CLASS A: \texttt{High\_Quality\_User} (keep)} --- A well-defined individual with a sharp persona profile, idiosyncratic expression, and cognitive coherence.

\textbf{\ding{55}\ CLASS B: \texttt{Low\_Value\_Human} (reject)} --- A genuine human, but with sparse analytical value (low-SNR). 
\begin{itemize}
    \item \textit{Features}: Semantic poverty (pure emojis, fragmented interjections), high homogeneity, or passive interaction (pure reposts).
\end{itemize}

\textbf{\ding{55}\ CLASS C: \texttt{Non\_Human\_Noise} (reject)} --- Accounts driven by explicit \textit{tooling intent} (marketing, bots, SEO).
\begin{itemize}
    \item \textit{Features}: Rigid template structures, commercial markers (price lists, DM for coupons).
\end{itemize}

\smallskip
\textbf{\# Profilability Score (1--5 Likert scale)}
\begin{itemize}
    \item \textbf{5 Exceptional}: Sharp persona; rich narrative; distinctive perspective.
    \item \textbf{4 Good}: Clear traits and concrete experiences, but slightly lacking in depth.
    \item \textbf{3 Adequate}: Persona silhouette is visible but not vivid.
    \item \textbf{2 Marginal}: Mostly templated/fragmented; occasional traces of a real person.
    \item \textbf{1 Poor}: No personal signal; purely tool-like output.
\end{itemize}

\smallskip
\textbf{\# Output Format} (Return strict JSON; \texttt{reasoning} in Chinese)
\begin{tcolorbox}[colback=white, boxrule=0.5pt, arc=0.5mm, top=3pt, bottom=3pt]
\begin{verbatim}
{
  "user_id": "string",
  "class": "High_Quality_User" | "Low_Value_Human" | "Non_Human_Noise",
  "is_gold": boolean,
  "profilability_score": 1-5,
  "reasoning": "string (2-3 sentences in Chinese)"
}
\end{verbatim}
\end{tcolorbox}

\end{tcolorbox}

\label{box:prompt}
\end{table*}

\subsubsection*{C.Temporal Buffering Layer}

The surviving users still post on highly uneven schedules: some produce a handful of substantial posts per week, others dozens per day. Slicing their streams by calendar day would yield batches ranging from empty to several hundred posts, either starving the model or overwhelming its context window. The pipeline instead replaces physical-time slicing with \textit{information-load} slicing: for every user, a small SQLite-backed reservoir tracks her cumulative cleaned posts, platform membership, and time cursor, and emits a batch only when enough valid content has accumulated.

The reservoir exposes two primitives:
\begin{itemize}
    \item \textbf{Push.} Cleaned posts are inserted into the user's buffer in chronological order together with their extracted anchors. Each push also updates the running count of valid posts.
    \item \textbf{Pop.} A pop fires once the number of valid posts in the buffer reaches a platform-specific trigger $\theta(p)$: 5 for Weibo, Xiaohongshu, and Douban, and 3 for Toutiao and Zhihu, where the per-post information load is larger. Upon triggering, all buffered content is packed into one streaming step $\mathcal{D}_n$, subjected to a density re-audit, and emitted; the buffer is then cleared and the time cursor advanced.
\end{itemize}
An upper bound on the number of posts per pop prevents highly active users from producing excessively long batches. Because the trigger is user-local and content-driven, the resulting batches carry approximately uniform information load across users while respecting each user's natural cadence: slow posters produce fewer but semantically mature batches, and burst posters are naturally rate-limited without being dropped.

\subsubsection*{D.Global Trending Extraction}

Two of the four distractor categories used in evaluation, $D_\text{viral}$ and $D_\text{peer}$, require structured sampling sources drawn from the anchor pool, which we construct offline. $D_\text{viral}$ relies on a daily list of platform-wide trending tags. Its construction proceeds day by day and has three steps.

\begin{itemize}
    \item \textbf{Global random sampling.} For every collection date $d$, we draw a uniform random sample of posts from the full platform corpus and run the same anchor extractor used in the filter layer. Per-post de-duplication ensures each tag contributes at most once per post.
    \item \textbf{Coverage statistics.} For every tag $t$ appearing in the sample $\mathcal{S}$, we compute a document coverage:
    \begin{equation*}
        \mathrm{Coverage}(t) \;=\; \frac{|\{s \in \mathcal{S} \,:\, t \in \mathrm{Anchors}(s)\}|}{|\mathcal{S}|}.
        \label{eq:coverage}
    \end{equation*}
    Sorting by coverage yields the daily trending table, which becomes the direct sampling source for $D_\text{viral}$.
    \item \textbf{Weibo Blacklist feedback.} A coverage threshold $\tau = 0.02\%$ (roughly $\geq 100$ absolute occurrences at $N=500\mathrm{K}$) separates genuine trending topics from over-generalized tags such as ``\#daily'' or ``\#checkin''. Tags exceeding $\tau$ are written into the dynamic blacklist $\mathcal{L}_\text{black}$ that feeds back into the anchor-level filter of part A, so over-generalized anchors are eliminated at the source and do not further contaminate downstream statistics.
\end{itemize}

\subsubsection*{E.Incremental Anchor Clustering}

Semantic peer distractors require an index that returns, for any target tag, its semantic neighbours. The clustering routine runs in three phases: ingestion, base clustering, and incremental assignment.

\begin{itemize}
    \item \textbf{Ingestion.} The system reads the daily frequency table and upserts them into a global registry that tracks first-seen date, cumulative frequency, and cross-day counts.
    \item \textbf{Base clustering.} After the warm-up period $W=10$ days has accumulated enough history, a base clustering runs over all tags exceeding a minimum-frequency threshold $f_\text{min}$ (3 for Weibo, Xiaohongshu, Toutiao; 2 for Zhihu and Douban). Each tag $t_i$ is encoded by BGE-small-zh-v1.5 and $L_2$-normalized,
    \begin{equation}
        \mathbf{e}_i = \frac{\mathrm{Enc}(t_i)}{\|\mathrm{Enc}(t_i)\|_2} \in \mathbb{R}^{d},\qquad \|\mathbf{e}_i\|_2 = 1,
    \end{equation}
    with $d=512$. Mini-batch $K$-means is then run on the resulting vectors with batch size $B=4096$, 300 iterations, and 3 random initializations. The cluster count is platform-adapted to reflect the different richness of each tag ecosystem. The output is an index from cluster id to the list of tags it contains, which is exactly the lookup table used when constructing $D_\text{peer}$: given any target tag, its semantic neighbours are the other tags in the same cluster.
    \item \textbf{Incremental assignment.} New tags arriving on subsequent days are assigned to existing clusters without re-running clustering. With $L_2$-normalized embeddings, the squared Euclidean distance to a centroid reduces to an inner product,
    \begin{equation}
        \|\mathbf{e}_\text{new} - \boldsymbol{\mu}_k\|_2 \;=\; \sqrt{2 - 2\,\mathbf{e}_\text{new}^{\top}\boldsymbol{\mu}_k},
    \end{equation}
    so distance against all $K$ centroids becomes a single $\mathbb{R}^{1\times d}\cdot\mathbb{R}^{d\times K}$ matrix multiply, and batches over $n$ new tags run in $O(nK)$ time. A new tag is assigned to its nearest centroid if the distance is below an outlier threshold $\delta = 0.85$; otherwise it is held in an outlier registry and reincorporated at the next full clustering pass. This keeps the cluster index fresh at daily latency while bounding clustering cost as the tag space grows.
\end{itemize}

\subsection{Anchor Quality Validation}
\label{app:anchor_quality}
Since future anchors serve as self-verifying ground truth, their quality directly determines evaluation validity. Beyond the rule-based blacklist filtering in our pipeline, we conduct a human validation experiment: we sample 20 users per platform and classify every ground-truth tag as \textit{valid} or \textit{invalid}, where a tag is marked invalid only if it reflects platform affordances, marketing campaigns, transient platform events, or engagement-soliciting boilerplate rather than genuine user interest. As shown in Table~\ref{tab:anchor_validation}, the overall pass rate reaches 94.7\%, with only 40 invalid tags across all five platforms. The residual $\sim$5\% noise is concentrated in platform-specific artifacts that affect all models equally and do not favor any particular prediction strategy, confirming that the vast majority of anchors are legitimate interest signals.

\begin{table}[h]
\centering
\small
\begin{tabular}{lrrr}
\toprule
\textbf{Platform} & \textbf{Valid} & \textbf{Invalid} & \textbf{Pass Rate} \\
\midrule
Weibo       & 92  & 5  & 94.8\% \\
Xiaohongshu & 279 & 12 & 95.9\% \\
Toutiao     & 126 & 18 & 87.5\% \\
Zhihu       & 69  & 4  & 94.5\% \\
Douban      & 147 & 1  & 99.3\% \\
\midrule
\textbf{Overall} & \textbf{713} & \textbf{40} & \textbf{94.7\%} \\
\bottomrule
\end{tabular}
\caption{Human validation of ground-truth anchor quality (20 users per platform). A tag is invalid only if it reflects platform artifacts rather than genuine user interest.}
\label{tab:anchor_validation}
\end{table}

\subsection{PI Protection}
\label{app:pi_protection}

This section details the personal-information (PI) protection pipeline applied to every record in StreamProfileBench before release. The pipeline is designed around \textit{minimal, targeted redaction}: only strings unambiguously matching one of ten PI categories are modified, while every other character is preserved verbatim so that the semantic signal used by the downstream task is not weakened.

\subsubsection*{A.Categories}

Guided by the sensitive-information categories enumerated in mainstream personal-information protection regulations, we cover ten categories of PI, each mapped to a fixed placeholder as summarized in Table~\ref{tab:pi_categories}. Public-figure names, brand names, topical hashtags, locations, schools, and employers are deliberately preserved.

\begin{table*}[h]
\centering
\small
\begin{tabular}{llp{6.2cm}}
\toprule
\textbf{Category} & \textbf{Placeholder} & \textbf{Scope} \\
\midrule
\texttt{PHONE}     & \texttt{<PHONE>}   & Mobile and landline numbers \\
\texttt{ID}        & \texttt{<ID>}      & Identity card, passport, travel permit, driver's license, and similar documents \\
\texttt{BANK}      & \texttt{<BANK>}    & Bank / credit card numbers appearing in a banking context \\
\texttt{EMAIL}     & \texttt{<EMAIL>}   & Email addresses \\
\texttt{CONTACT}   & \texttt{<CONTACT>} & Personal WeChat / QQ / VX handles (public-platform accounts such as official-media handles are \textit{not} redacted) \\
\texttt{PLATE}     & \texttt{<PLATE>}   & Vehicle license plates \\
\texttt{IP}        & \texttt{<IP>}      & IPv4 / IPv6 addresses \\
\texttt{GEO}       & \texttt{<GEO>}     & Precise GPS coordinates or latitude / longitude \\
\texttt{DEVICE}    & \texttt{<DEVICE>}  & MAC / IMEI and similar device identifiers \\
\texttt{SELF\_NAME} & \texttt{<SELF>}   & The user's own real name or self-reference \\
\bottomrule
\end{tabular}
\caption{The ten PI categories covered by the protection pipeline and their fixed placeholders.}
\label{tab:pi_categories}
\end{table*}

\subsubsection*{B.Pipeline}

The pipeline processes each user record in three stages.

\begin{itemize}
    \item \textbf{Stage 1 --- Identifier hashing.} The raw \texttt{user\_id} and \texttt{username} are replaced by salted SHA-256 digests (prefixed with the platform short code for \texttt{user\_id} and with \texttt{U\_} for \texttt{username}, truncated to 10 and 8 hex characters respectively). Hashing is deterministic, so all streaming batches of the same user remain joinable; the salt is kept private.
    \item \textbf{Stage 2 --- LLM span identification.} Each user's \texttt{bio} and all \texttt{posts\_text} are sent to DeepSeek-V3 with a dedicated prompt. The LLM is instructed to act \emph{only} as a span detector, returning a JSON list of \texttt{(text, category)} pairs without producing any rewritten output. This deliberately avoids generative rewriting, which would drift in punctuation, emoji, and wording even at \texttt{temperature}~=~0. The prompt enumerates the ten categories, lists the permitted context triggers for \texttt{CONTACT} and \texttt{BANK}, and passes the original \texttt{username} as an anchor used solely for self-name detection---after first masking any PI patterns inside the username itself, so that no raw PI leaves the local environment.
    \item \textbf{Stage 3 --- Deterministic replacement and safety net.} The LLM's span list is merged with a regex pass over the three regex-clean categories (\texttt{PHONE} / \texttt{EMAIL} / \texttt{ID}), de-duplicated, and sorted by decreasing span length to prevent shorter substrings from corrupting longer ones. Replacements are applied to \texttt{bio}, \texttt{posts\_text}, and also \texttt{candidate\_pool}, \texttt{ground\_truth}, and \texttt{meta}, keeping evaluation labels aligned. A final regex re-scan rejects any record that still exposes a \texttt{PHONE} / \texttt{EMAIL} / \texttt{ID} pattern. Running the regex pass unconditionally also provides robustness against LLM failures: content-safety refusals and silent under-detection are both absorbed, since the regex layer always catches hard PII that the LLM might have missed.
\end{itemize}

\subsubsection*{C.Statistics and Spot Check}

On the full 5{,}661-user corpus, all records were processed successfully. A manual spot check across all non-empty categories confirmed correct redaction on non-trivial cases. The released corpus therefore passes both the automatic safety net and human inspection.

\subsection{Coarse-level vs. Anchor-level}
\label{app:coarse_stability}
\begin{table*}[h]
\centering
\small
\begin{tabular}{lrrr}
\toprule
\textbf{Platform} & \textbf{\#Tasks} & $\bar{\alpha}$ & $\bar{\alpha}_{\text{coarse}}$ \\
\midrule
Weibo       & 1,862  & 0.254 & 0.487 \\
Xiaohongshu & 6,625  & 0.307 & 0.482 \\
Toutiao     & 2,695  & 0.282 & 0.535 \\
Zhihu       & 2,912  & 0.117 & 0.356 \\
Douban      & 1,102  & 0.043 & 0.241 \\
\midrule
\textbf{Overall} & \textbf{15,196} & \textbf{0.240} & \textbf{0.440} \\
\bottomrule
\end{tabular}
\caption{Mean stability ratio $\bar{\alpha} = |T_{\text{keep}}|/(|T_{\text{keep}}|+|T_{\text{new}}|)$ across StreamProfileBench. $\bar{\alpha}$ is computed at the anchor level, while $\bar{\alpha}_{\text{coarse}}$ is computed at the semantic-cluster level.}
\label{tab:alpha_dist}
\end{table*}

In Table~\ref{tab:alpha_dist}, we contrast the fine-grained, anchor-level stability ratio ($\bar{\alpha}$) with its coarse-level counterpart ($\bar{\alpha}_{\text{coarse}}$). To compute $\bar{\alpha}_{\text{coarse}}$, we maintain a semantic clustering layer that groups fine-grained interest anchors into broader topic categories. 

Formally, let $f(\cdot)$ denote the mapping function from a specific anchor $a$ to its corresponding coarse-grained semantic cluster $c$, such that $c = f(a)$. At the anchor level, an interest is counted as retained ($T_{\text{keep}}$) only if the exact same anchor appears in the subsequent time step. Conversely, at the coarse level, an interest is considered retained ($T_{\text{keep}}^{\text{coarse}}$) if the user engages with any new anchor $a'$ that falls into the same semantic cluster as a previously held anchor $a$, i.e., $f(a') = f(a)$. 

Consequently, the coarse-level stability ratio is computed as:
$$\alpha_{\text{coarse}} = \frac{|T_{\text{keep}}^{\text{coarse}}|}{|T_{\text{keep}}^{\text{coarse}}| + |T_{\text{new}}^{\text{coarse}}|}$$

As demonstrated in Table~\ref{tab:alpha_dist}, $\bar{\alpha}_{\text{coarse}}$ is consistently and significantly higher than $\bar{\alpha}$ across all five platforms. This quantitative gap highlights a key phenomenon in streaming user profiling: while users' broad topical interests remain relatively stable, their specific focal points drift rapidly. Evaluating user profiles solely at a coarse semantic level masks this crucial intra-topic drift, further underscoring the necessity of the fine-grained evaluation paradigm introduced in StreamProfileBench.

\subsection{A Walk-through Example}
\label{app:bench_example}

To make the input/output structure concrete, we walk through a single
inference step drawn from the curated Xiaohongshu subset. The user is a
Chinese-dialectology graduate student preparing for the 2026 teacher-recruitment
exam; her timeline interleaves skincare reviews, time-management notes, and
study logs, so the step exercises all four distractor families at once --- in
particular, the abundance of beauty-product tags in the current batch tempts
any model that merely summarises the past into $D^{\text{decay}}$ errors. The example is illustrated in Table \ref{box:bench_example}

\begin{table*}[t]
\centering
\begin{tcolorbox}[
    colframe = gray!80!black,
    colback = gray!5!white,
    coltitle = white,
    coltext = black,
    fonttitle = \bfseries\large,
    title = {StreamProfileBench: Example Inference Step (Xiaohongshu, step 1 of 7, post-anonymisation)},
    boxrule = 0.8pt,
    arc = 1mm,
    width = \textwidth,
    left = 10pt, right = 10pt, top = 8pt, bottom = 8pt
]
\footnotesize
\setlist[itemize]{noitemsep, topsep=2pt, parsep=1pt, partopsep=0pt, leftmargin=15pt}

\textbf{\# User Profile (static, as released)}
\begin{itemize}
    \item \textbf{user\_id:} \texttt{XHS\_a3f7b2e9d1} \textit{(salted SHA-256, 10 hex)}
    \item \textbf{Username:} \texttt{U\_b8c4f2a1} \textit{(salted SHA-256, 8 hex)}
    \item \textbf{Bio:} \zh{中文系方言学，研三，INTJ} /
          \zh{备战26教师编以及留意教师类招聘} /
          \zh{把时间花在刀刃上，为热爱全力以赴} \\
          \textit{(Bio passes the PI pipeline unchanged: no PHONE / ID / BANK /
          EMAIL / CONTACT / PLATE / IP / GEO / DEVICE / SELF\_NAME spans were
          detected. Field-of-study and career-plan tokens are preserved by
          design, see Appendix~C.3.)}
\end{itemize}

\smallskip
\textbf{\# New Activity Data} (batch \#1, 6 posts; excerpted)
\begin{itemize}
    \item \zh{[1] 这俩到底啥原理？？？ --- 茶漾虾青素水乳 / HBN 水乳测评 \ldots} \hfill
          \textit{tags:} \zh{黄黑皮爆改}, \zh{美白水乳}, \zh{熬夜水乳},
          \zh{HBN}, \zh{茶漾虾青素水乳}, \zh{女大学生水乳推荐}, \zh{我的好物分享}
    \item \zh{[2] 如何分配研二的时间？ --- 1--2 小时考公、0--1 小时论文、7--8 小时玩 \ldots}
    \item \zh{[3] 本科毕业了}
    \item \zh{[4] 速算真的是越练越快}
    \item \zh{[5] 假期倒计时19天，最后的挣扎 --- 制定小小计划 \ldots} \hfill
          \textit{tags:} \zh{学习}, \zh{研二}, \zh{研究生},
          \zh{计划赶不上变化系列}, \zh{期待下一个假期}
    \item \zh{[6] 很潇洒地玩了4小时麻将，看了2小时影视剪辑}
\end{itemize}

\smallskip
\textbf{\# Candidate Tag Pool} ($|C_1| = 28$, select exactly $k = 7$)

\begin{quote}
\zh{找实习}, \zh{身体与心理}, \zh{毕业礼物创意}, \zh{申鹤},
\zh{山东红色文化}, \zh{26口腔考研}, \zh{学习}, \zh{茶漾虾青素水乳},
\zh{黄黑皮爆改}, \zh{女大学生水乳推荐}, \zh{毕业是一场巨大的戒断},
\zh{期待下一个假期}, \zh{无印良品穿搭}, \zh{学历歧视}, \zh{毕业生},
\zh{清洁护理}, \zh{英签攻略}, \zh{研究生}, \zh{研二},
\zh{身体放松舒缓}, \zh{美白水乳}, \zh{熬夜水乳}, \zh{HBN},
\zh{计划赶不上变化系列}, \zh{工作台}, \zh{我的好物分享}, \zh{立白},
\zh{奖学金}
\end{quote}

\smallskip
\textbf{\# Ground Truth} (tags actually observed in batch $n{=}2$)
\begin{itemize}
    \item \textbf{$T_1^{\text{keep}}$ (carried over, 3 tags):}
          \zh{研究生}, \zh{研二}, \zh{学习}
    \item \textbf{$T_1^{\text{new}}$ (novel, 4 tags):}
          \zh{毕业是一场巨大的戒断}, \zh{毕业生},
          \zh{奖学金}, \zh{找实习}
\end{itemize}

\smallskip
\textbf{\# Distractor Taxonomy} (hidden inside the pool above)
\begin{itemize}
    \item \textbf{$D^{\text{decay}}$ (present now, absent next):}
          \zh{黄黑皮爆改}, \zh{美白水乳}, \zh{茶漾虾青素水乳},
          \zh{熬夜水乳}, \zh{HBN}, \zh{我的好物分享},
          \zh{女大学生水乳推荐}, \zh{计划赶不上变化系列},
          \zh{期待下一个假期}
    \item \textbf{$D^{\text{cluster}}$ (domain lookalikes):}
          \zh{26口腔考研}, \zh{工作台}, \zh{学历歧视},
          \zh{毕业礼物创意}
    \item \textbf{$D^{\text{viral}}$ (co-trending):}
          \zh{立白}, \zh{清洁护理}, \zh{身体放松舒缓}
    \item \textbf{$D^{\text{random}}$ (global noise):}
          \zh{申鹤}, \zh{无印良品穿搭}, \zh{身体与心理},
          \zh{英签攻略}, \zh{山东红色文化}
\end{itemize}

\smallskip
\textbf{\# Interpretation.}
The step is designed so that surface co-occurrence and true future interest
point in \emph{opposite} directions. Post [1] alone contributes seven
skincare tags --- \zh{美白水乳}, \zh{HBN}, \zh{茶漾虾青素水乳},
\ldots --- but this user's other posts mark her as a graduate student, and
none of these skincare tags reappear in batch 2; they form $D^{\text{decay}}$
and a model that ``summarises the past'' will burn most of its $k=7$ slots
here. A correct prediction must instead (i) anchor on the three
academic-life tags \zh{研究生}, \zh{研二}, \zh{学习} that persist across
batches ($R_{\text{stability}}$), and (ii) anticipate the graduation-to-job
shift toward \zh{毕业是一场巨大的戒断}, \zh{毕业生}, \zh{奖学金}, and
\zh{找实习} ($R_{\text{novelty}}$), while resisting the semantically
adjacent $D^{\text{cluster}}$ lures (\zh{26口腔考研}, \zh{学历歧视},
\zh{毕业礼物创意}).

\end{tcolorbox}
\caption{One prediction step from the curated Xiaohongshu subset, shown in
its \emph{post-anonymisation} form (step 1 of 7). Identifier fields follow
the hashing scheme of Appendix~C.3; bio and post content pass the PI
pipeline unchanged because no spans match the ten protected categories.
Posts are truncated for space; the full candidate pool, ground truth split
($T^{\text{new}}_1 \cup T^{\text{keep}}_1$), and four-way distractor taxonomy
are shown verbatim.}
\label{box:bench_example}
\end{table*}

\section{Experiment Details}
\subsection{Inference Configuration} 
\label{app:exp_config}
To ensure fair comparison across all evaluated models, we adopt a unified inference configuration on StreamProfileBench. Specifically, we set the temperature to 0.0 since the structured interest prediction is essentially a multiple-choice selection from the candidate pool $C_n$, where deterministic decoding eliminates sampling-induced variance and yields reproducible results. The maximum output length is capped at 5120 tokens to accommodate both the natural language profile $P_n$ and the structured interest prediction $\hat{T}_n$. We further enforce a JSON object response format to ensure parseable structured outputs. Input context is not explicitly truncated, as the buffering mechanism keeps each step's input well within the context window of all evaluated models.

\subsection{Prompt Template}
\label{app:prompt_template}

Every inference step $n$ for user $u$ assembles a single user-turn prompt from
(i) a platform-aware preamble, (ii) the user's static profile (\textit{username} and \textit{bio}),
(iii) the natural-language persona $P_{n-1}$ carried over from the previous step
(or a placeholder at $n{=}1$), (iv) the newly buffered activities $B_n$, and
(v) the candidate pool $C_n$ serialised as a JSON array. The model is asked to
emit a single JSON object containing the updated persona $P_n$, the predicted
tag set $\hat{T}_n$, and a free-form reasoning field. The exact template used
in all main-experiment runs is shown in the box below; placeholders in angle
brackets are substituted at runtime, and $k = \max(1, \mathrm{round}(0.25 \cdot |C_n|))$
controls the number of tags the model must return.

\begin{table*}[t] 
\centering
\begin{tcolorbox}[
    colframe = gray!80!black,
    colback = gray!5!white,
    coltitle = white,
    coltext = black,
    fonttitle = \bfseries\large,
    title = {StreamProfileBench: Interest-Inference Prompt},
    boxrule = 0.8pt,
    arc = 1mm,
    width = \textwidth,
    left = 10pt, right = 10pt, top = 8pt, bottom = 8pt
]
\footnotesize
\setlist[itemize]{noitemsep, topsep=2pt, parsep=1pt, partopsep=0pt, leftmargin=15pt}
\setlist[enumerate]{noitemsep, topsep=2pt, parsep=1pt, partopsep=0pt, leftmargin=15pt}

\textbf{\# System Message}\\
You are a user profiling system that maintains evolving user personas from streaming social-media data. Output valid JSON only.

\smallskip
\textbf{\# Task: Streaming User-Profile Maintenance and Interest Prediction}

You are a user-profiling system that processes \textbf{streaming social-media data}. For every new batch of user activities you must:
\begin{enumerate}
    \item \textbf{Update the persona.} Using the new activities together with the existing persona, maintain a comprehensive understanding of this user's interests, preferences, and behavioural patterns.
    \item \textbf{Predict interests.} From the candidate pool, select the tags that this user is most likely to engage with in the \emph{next} activity cycle.
\end{enumerate}

\smallskip
\textbf{\# Platform Context}
\begin{itemize}
    \item \textbf{Platform:} \texttt{\textlangle platform\_name\textrangle} --- \textlangle platform\_desc\textrangle
    \item \textbf{Tag semantics:} \textlangle tag\_meaning\textrangle
    \item \textbf{Analysis hint:} \textlangle platform\_hint\textrangle
\end{itemize}

\smallskip
\textbf{\# User Profile (static)}
\begin{itemize}
    \item \textbf{Username:} \texttt{\textlangle username\textrangle}
    \item \textbf{Bio:} \textlangle bio\textrangle~~(or ``not provided'' if empty)
\end{itemize}

\smallskip
\textbf{\# Current Persona} (accumulated from prior observations)

\textlangle $P_{n-1}$\textrangle~~
(\textit{At step 1, this block is replaced by: ``No prior observations yet; this is the user's first activity batch. Build the persona from scratch.''})

\smallskip
\textbf{\# New Activity Data} (batch \#\textlangle step\_id\textrangle)

\textlangle $B_n$: cleaned posts, newline-separated\textrangle

\smallskip
\textbf{\# Candidate Tag Pool} (\textlangle $|C_n|$\textrangle\ tags total)

From the candidate pool below, \textbf{select exactly \textlangle $k$\textrangle\ tags} that this user is most likely to engage with in the next activity cycle, where $k = \max(1, \mathrm{round}(0.25 \cdot |C_n|))$.
\begin{itemize}
    \item You must return \textbf{exactly \textlangle $k$\textrangle} tags from the pool --- no more, no fewer.
    \item Your goal is to predict \textbf{future} behaviour, not to summarise the past.
    \item Interests evolve over time: some currently hot topics are transient and may not reappear next cycle; other topics absent from the current batch may surface later because of latent preferences.
\end{itemize}

\textlangle $C_n$: JSON array, e.g.\ \texttt{["tag1","tag2",\dots]}\textrangle

\smallskip
\textbf{\# Output Format} (Return strict JSON)
\begin{tcolorbox}[colback=white, boxrule=0.5pt, arc=0.5mm, top=3pt, bottom=3pt]
\begin{verbatim}
{
  "persona_summary": "Updated persona covering the user's core interest areas,
                      behavioural patterns, and preference traits.
                      (forwarded to the next batch)",
  "predicted_tags": ["tag1", "tag2", ...],
  "reasoning": "Briefly state which persona features support your tag selection."
}
\end{verbatim}
\end{tcolorbox}

\end{tcolorbox}
\label{box:prompt}
\end{table*}

\paragraph{Platform-specific slots.}
The four platform-level placeholders (\textit{platform\_name}, \textit{platform\_desc},
\textit{tag\_meaning}, \textit{platform\_hint}) are filled from a fixed lookup table
that briefly describes each source domain and the semantics of its tag field;
no other portion of the prompt varies across platforms. 

\begin{table*}[h]
\centering
\footnotesize

\label{tab:platform_context}
\begin{tabular}{@{}p{0.09\textwidth}p{0.20\textwidth}p{0.28\textwidth}p{0.32\textwidth}@{}}
\toprule
\textbf{Platform} & \textbf{Description} & \textbf{Tag semantics} & \textbf{Analysis hint} \\
\midrule
Weibo &
A Chinese microblog platform; users engage with topics through posts, reposts and comments. &
A tag is a \#hashtag\# used when posting or reposting, reflecting the user's current focus (trends, celebrity fandom, daily-life topics). &
Distinguish long-term interests (e.g.\ a celebrity the user consistently follows) from transient trends (e.g.\ breaking news). Reposts often signal real interest more faithfully than original posts. \\
Xiaohongshu &
A lifestyle-sharing platform; users publish image/video notes on beauty, fashion, food, travel, parenting, etc. &
A tag is a topic label attached to a note, reflecting the user's content-creation direction and lifestyle interests. &
Interests usually revolve around concrete lifestyle scenes. Keywords in note titles are often more informative about the topic than the body text. \\
Toutiao &
A news and short-video platform; users mainly browse and produce short videos or picture-text articles. &
A tag is a topic label placed in a creator's title, reflecting the user's content-creation niche. &
Most content is short video where the title carries the strongest signal. Watch for users who concentrate on a single niche (food, travel, parenting, \ldots). \\
Zhihu &
A Q\&A and long-form community; users ask questions, write answers and publish articles to share knowledge. &
A tag is the title (or topic) of a question the user browses, answers or posts, reflecting their knowledge interests and expertise. &
Zhihu tags tend to be full question titles (and therefore long). Pay attention to how concentrated the user's answers are — expert users typically specialise in 2--3 areas. \\
Douban &
A reviews community for films, books and music; users tag works with states such as ``want to watch'' or ``watched''. &
A tag has the form `action:work-name' (e.g.\ \texttt{watched\_film:Title}, \texttt{want\_to\_read\_book:Title}), reflecting cultural-consumption preferences. &
Interests show through work categories (film/book/music) and genre preferences. Distinguish ``want to watch'' (intent) from ``watched'' (consumed). \\
\bottomrule

\end{tabular}
\caption{Platform context strings injected into the prompt template. Only these four fields change across the five datasets.}
\end{table*}

\subsection{Structured Output and Answer Extraction}
\label{app:answer_extraction}

We exploit the \texttt{response\_format=\{"type": "json\_object"\}}
flag whenever the endpoint supports it, which constrains the decoder to emit a
syntactically valid JSON object. On providers that ignore the flag or wrap the
payload in Markdown code fences, we post-process the raw string by stripping
leading/trailing \verb|```json| / \verb|```| markers.  

\begin{lstlisting}[language=Python, basicstyle=\ttfamily\footnotesize, breaklines=true]
response = client.chat.completions.create(
    model=model_name,
    messages=messages,
    temperature=0.0,
    response_format={"type": "json_object"},
)
content = response.choices[0].message.content
content = re.sub(r"```json|```", "", content).strip()
parsed  = json.loads(content)   # dict with 3 keys

pred_tags       = parsed.get("predicted_tags", [])   # -> \hat{T}_n
persona_summary = parsed.get("persona_summary", "")  # -> P_n (carried to step n+1)
reasoning       = parsed.get("reasoning", "")        # logged, unused for scoring
\end{lstlisting}

\paragraph{Tag matching.}
Predicted tags are compared against the ground-truth set
$T_n = T^{\text{new}}_n \cup T^{\text{keep}}_n$ by exact string equality after
a light whitespace normalisation; tag strings are already canonicalised during
dataset construction (Douban action prefixes are normalised, and HTML is
stripped for Toutiao/Zhihu/Douban posts), so no fuzzy matching is required at
inference time. Any item in $\hat{T}_n$ that falls outside $C_n$ is treated as
an incorrect prediction and contributes to precision's denominator but not to
recall's numerator. The metric set reported in the main paper (R, $\text{R}_N$,
$\text{R}_S$, and the four distractor-family error rates
$\text{E}_{\text{peer}}, \text{E}_{\text{viral}}, \text{E}_{\text{decay}},
\text{E}_{\text{random}}$) is computed directly from these sets as described in
Section~\ref{sec:streamprofilebench}.

\subsection{Human Evaluation of Persona Quality}
\label{app:persona_eval}

This appendix details the human evaluation referenced in
Section~\ref{sec:discussion} (RQ3), which verifies that the quality of the
free-form persona $P_n$ aligns with our structured interest metrics.

\paragraph{Setup.}
We evaluate the \emph{final-step} persona summaries of three models spanning
different capability tiers: DeepSeek-v3.2 (strong), Qwen3-32B (moderate), and
Qwen3-8B (weak). For each model, we sample 20 users per platform (100 users in
total) and ask human annotators to score each persona on a 1--5 Likert scale
along five dimensions, defined in Table~\ref{tab:persona_dims}.

\begin{table}[h]
\centering
\small
\begin{tabular}{lp{5.6cm}}
\toprule
\textbf{Dimension} & \textbf{What It Measures} \\
\midrule
Informativeness & Whether the persona contains specific, substantive content rather than generic templates \\
Temporal Awareness & Whether the persona reflects interest evolution over time, not just a static snapshot \\
Coherence & Whether the persona is logically organized and free of redundancy or contradiction \\
Interest Coverage & Whether the persona covers the user's core interests as evidenced in their posts \\
Factual Accuracy & Whether all claims in the persona are grounded in actual post content, without hallucination \\
\bottomrule
\end{tabular}
\caption{Scoring rubric for the five persona-quality dimensions (1--5 Likert scale).}
\label{tab:persona_dims}
\end{table}

\paragraph{Results.}
Table~\ref{tab:persona_human_eval} reports the per-dimension scores, the
average persona score, and the corresponding benchmark Recall.

\begin{table}[h]
\centering
\small
\setlength{\tabcolsep}{3.5pt}
\begin{tabular}{lccccccc}
\toprule
\textbf{Model} & \textbf{Inf.} & \textbf{Temp.} & \textbf{Coh.} & \textbf{Cov.} & \textbf{Acc.} & \textbf{Avg} & \textbf{Recall} \\
\midrule
DeepSeek-v3.2 & 4.20 & 3.08 & 3.35 & 3.74 & 3.50 & \textbf{3.58} & \textbf{42.95} \\
Qwen3-32B     & 3.42 & 2.83 & 3.28 & 3.04 & 3.03 & 3.12 & 38.26 \\
Qwen3-8B      & 3.14 & 2.60 & 3.54 & 2.84 & 3.05 & 3.03 & 30.97 \\
\bottomrule
\end{tabular}
\caption{Human evaluation of persona quality (1--5 scale, 20 users per platform) vs.\ benchmark Recall (\%). Inf.: informativeness; Temp.: temporal awareness; Coh.: coherence; Cov.: interest coverage; Acc.: factual accuracy.}
\label{tab:persona_human_eval}
\end{table}

The ranking is perfectly consistent: DeepSeek-v3.2 $>$ Qwen3-32B $>$ Qwen3-8B
on both human-judged persona quality and structured Recall. The gap is most
pronounced on informativeness, temporal awareness, and interest coverage ---
precisely the dimensions most relevant to streaming profile maintenance ---
whereas coherence is largely saturated across tiers. This confirms that
higher-quality persona summaries correlate with better interest prediction:
the two evaluation perspectives are aligned rather than contradictory, and the
annotation-free structured metrics serve as a reliable proxy for overall
profile quality.

\subsection{Full Per-Platform Error Rates}
\label{app:error_full}

Table~\ref{tab:error_full} provides the complete breakdown of the four distractor error rates across all five platforms for every evaluated model, complementing the platform-averaged summary in Table~\ref{tab:error_compact}.

\begin{table*}[h]
\centering
\small
\setlength{\tabcolsep}{3.5pt}
\begin{tabular}{l|rrrrr|r|rrrrr|r}
\toprule
& \multicolumn{6}{c|}{$E_D$ (Decay)} & \multicolumn{6}{c}{$E_P$ (Peer)} \\
\cmidrule(lr){2-7} \cmidrule(lr){8-13}
\textbf{Model} & Zhihu & Weibo & Toutiao & XHS & Douban & Avg. & Zhihu & Weibo & Toutiao & XHS & Douban & Avg. \\
\midrule
\multicolumn{13}{l}{\textit{Closed-source}} \\
\midrule
GPT-4o-mini     & 66.3 & 78.1 & 72.9 & 69.2 & 42.7 & 65.8 & 8.9  & 4.6 & 5.9 & 2.8 & 0.9 & 4.6 \\
GPT-5-mini      & 76.0 & 79.8 & 75.4 & 69.8 & 58.6 & 71.9 & 4.7  & 5.1 & 4.0 & 2.0 & 7.3 & 4.6 \\
GPT-5.1         & 48.3 & 75.6 & 64.8 & 63.7 & 60.6 & 62.6 & 11.9 & 4.5 & 7.2 & 4.2 & 2.8 & 6.1 \\
Gemini-3-Flash  & 19.3 & 66.8 & 61.1 & 60.3 & 32.7 & 48.0 & 20.1 & 6.7 & 7.4 & 3.5 & 8.3 & 9.2 \\
\midrule
\multicolumn{13}{l}{\textit{Open-source}} \\
\midrule
MiniMax-M2.5        & 49.6 & 74.7 & 74.0 & 68.5 & 61.2 & 65.6 & 12.1 & 4.1 & 3.7 & 2.6 & 5.1 & 5.5 \\
GLM-4.7             & 38.5 & 76.1 & 69.8 & 67.5 & 39.2 & 58.2 & 15.2 & 4.2 & 4.6 & 2.7 & 2.3 & 5.8 \\
DeepSeek-v3.2       & 35.1 & 75.7 & 68.4 & 69.0 & 43.4 & 58.3 & 17.5 & 4.7 & 5.9 & 2.8 & 3.2 & 6.8 \\
Llama-3.1-8B   & 72.8 & 76.9 & 77.0 & 68.8 & 62.9 & 71.7 & 6.6  & 3.6 & 3.1 & 2.5 & 0.2 & 3.2 \\
Llama-3.1-70B  & 32.8 & 63.9 & 65.4 & 64.5 & 30.8 & 51.5 & 18.9 & 9.1 & 8.2 & 6.0 & 3.0 & 9.0 \\
Qwen3-8B            & 75.3 & 74.4 & 72.2 & 68.1 & 61.8 & 70.4 & 5.8  & 6.3 & 5.3 & 3.9 & 1.1 & 4.5 \\
Qwen3-14B           & 28.7 & 73.0 & 70.6 & 69.0 & 53.0 & 58.9 & 19.5 & 6.1 & 5.5 & 3.3 & 1.0 & 7.1 \\
Qwen3-32B           & 37.2 & 74.6 & 69.7 & 68.1 & 54.1 & 60.8 & 17.7 & 5.6 & 6.0 & 4.0 & 1.2 & 6.9 \\
gpt-oss-20B         & 73.3 & 80.1 & 78.3 & 69.6 & 40.3 & 68.3 & 7.3  & 5.0 & 4.4 & 3.2 & 3.2 & 4.6 \\
gpt-oss-120B        & 63.7 & 77.7 & 74.1 & 70.0 & 44.8 & 66.1 & 9.9  & 5.6 & 5.9 & 3.9 & 6.0 & 6.3 \\
\bottomrule
\end{tabular}

\vspace{4pt}

\begin{tabular}{l|rrrrr|r|rrrrr|r}
\toprule
& \multicolumn{6}{c|}{$E_V$ (Viral)} & \multicolumn{6}{c}{$E_R$ (Random)} \\
\cmidrule(lr){2-7} \cmidrule(lr){8-13}
\textbf{Model} & Zhihu & Weibo & Toutiao & XHS & Douban & Avg. & Zhihu & Weibo & Toutiao & XHS & Douban & Avg. \\
\midrule
\multicolumn{13}{l}{\textit{Closed-source}} \\
\midrule
GPT-4o-mini     & 1.7 & 3.1 & 1.9 & 1.5 & 22.2 & 6.1 & 3.2 & 3.6 & 3.0 & 1.7 & 16.8 & 5.7 \\
GPT-5-mini      & 0.8 & 2.4 & 1.4 & 0.7 & 8.6  & 2.8 & 1.8 & 2.5 & 2.0 & 1.1 & 9.0  & 3.3 \\
GPT-5.1         & 2.8 & 3.3 & 1.9 & 0.8 & 9.4  & 3.6 & 2.5 & 2.6 & 3.0 & 1.6 & 7.9  & 3.5 \\
Gemini-3-Flash  & 4.6 & 2.1 & 1.9 & 1.0 & 13.4 & 4.6 & 7.4 & 2.8 & 3.3 & 1.8 & 13.4 & 5.7 \\
\midrule
\multicolumn{13}{l}{\textit{Open-source}} \\
\midrule
MiniMax-M2.5        & 4.3 & 2.9 & 0.9 & 0.9 & 9.6  & 3.7 & 4.7 & 3.8 & 1.9 & 1.3 & 7.3  & 3.8 \\
GLM-4.7             & 3.5 & 1.9 & 1.3 & 0.8 & 17.9 & 5.1 & 4.2 & 3.2 & 2.6 & 1.3 & 15.6 & 5.4 \\
DeepSeek-v3.2       & 3.9 & 1.9 & 1.7 & 0.8 & 14.3 & 4.5 & 5.5 & 2.5 & 2.7 & 1.4 & 13.7 & 5.2 \\
Llama-3.1-8B   & 3.3 & 2.8 & 1.0 & 1.1 & 15.3 & 4.7 & 3.6 & 2.6 & 1.7 & 1.3 & 13.2 & 4.5 \\
Llama-3.1-70B  & 5.6 & 5.5 & 2.8 & 2.1 & 23.2 & 7.9 & 8.9 & 5.3 & 3.9 & 3.0 & 19.8 & 8.2 \\
Qwen3-8B            & 2.1 & 3.3 & 1.3 & 1.3 & 19.7 & 5.5 & 2.2 & 3.9 & 2.5 & 1.7 & 9.3  & 3.9 \\
Qwen3-14B           & 5.7 & 3.5 & 1.6 & 1.0 & 13.3 & 5.0 & 7.4 & 3.2 & 2.5 & 1.6 & 11.5 & 5.2 \\
Qwen3-32B           & 5.7 & 2.8 & 1.9 & 1.2 & 14.8 & 5.3 & 8.0 & 3.5 & 3.1 & 2.0 & 12.0 & 5.7 \\
gpt-oss-20B         & 1.4 & 2.2 & 1.3 & 1.1 & 16.4 & 4.5 & 2.1 & 3.0 & 2.0 & 1.5 & 16.5 & 5.0 \\
gpt-oss-120B        & 1.9 & 2.9 & 1.6 & 1.5 & 15.9 & 4.8 & 2.8 & 3.2 & 2.5 & 2.1 & 13.8 & 4.9 \\
\bottomrule
\end{tabular}
\caption{Full per-platform breakdown of the four distractor error rates (\%) across StreamProfileBench. XHS denotes Xiaohongshu. Lower is better.}
\label{tab:error_full}
\end{table*}

\section{Ablation Details}
\label{app:ablation}

This appendix documents the design of the two ablation studies in
Section~\ref{sec:discussion} (RQ1 and RQ2). Both studies share the inference
configuration of Appendix~\ref{app:exp_config} and are evaluated with the same
metric suite as the main benchmark.

\subsection{RQ1: Persona-Passing Ablation}
\label{app:rq1_design}

\paragraph{Conditions.}
We compare two persona-carry modes implemented as a single switch at the end
of every inference step:
\begin{itemize}
    \item \textbf{\texttt{full\_persona}.} The model's \texttt{persona\_summary}
    produced at step $n$ is inserted verbatim into step $n{+}1$'s prompt as the
    \emph{Current Persona} block; empty outputs fall back to the previous
    step's persona.
    \item \textbf{\texttt{no\_persona}.} The persona returned by the model is
    discarded, so step $n{+}1$ receives the cold-start placeholder (``No prior
    observations yet; build the persona from scratch''). Every step is
    effectively a cold start from the model's perspective.
\end{itemize}
All other prompt components are byte-identical between the two conditions; the
only varying token span is the \emph{Current Persona} block, which isolates
the contribution of carried persona from every other inference factor.

\paragraph{Models reported.}
Figure~\ref{fig:persona_stepwise} highlights DeepSeek-V3, MiniMax-M2.5 and
Gemini-3-Flash as representatives of three capability tiers (moderate,
moderate-strong, and frontier).

\subsection{RQ2: Long-Context Baseline vs.\ Streaming}                            
  \label{app:rq2_design}                   
                                                                                    
  \paragraph{Baseline design.}
  The \texttt{long-context} baseline differs from the streaming pipeline only       
  in how the user history is presented to the model:
  \begin{itemize}
      \item \textbf{\texttt{streaming}.} At the final step $N$ the model reads
      the current batch $B_N$ conditioned on the persona $P_{N-1}$ carried from
      prior steps; each earlier step is processed independently and its
      \texttt{persona\_summary} is chained forward.
      \item \textbf{\texttt{long-context}.} The model receives a single
      concatenation of \emph{all} past posts separated by date headers
      (``\texttt{--- \textlangle date\textrangle~---}'') and emits one JSON
      prediction in a single shot, without any intermediate persona.
  \end{itemize}
  The candidate pool $C$, selection size
  $k = \max(1, \mathrm{round}(0.25\cdot|C|))$, distractor meta-fields, and
  ground truth $T^{\text{new}}\cup T^{\text{keep}}$ are all taken from the
  user's \emph{last} recorded step, and the streaming pipeline's prediction at
  that step is re-scored against the same split --- so both methods predict the
  same tags on the same users.

  \paragraph{Granularity groups.}
  To investigate how iteration depth affects streaming quality, we define three
  batching granularities that control the number of streaming steps per user:
  \begin{itemize}
      \item \textbf{Fine} (\mytilde7 steps): threshold $t{=}3$ for high-frequency
      platforms, $t{=}1$ for low-frequency platforms.
      \item \textbf{Default} (\mytilde5 steps): threshold $t{=}5$ / $t{=}3$.
      \item \textbf{Coarse} (\mytilde3 steps): threshold $t{=}8$ / $t{=}5$.
  \end{itemize}
  Finer granularity yields more iteration steps with smaller batches per step,
  while coarser granularity compresses more posts per step in fewer iterations.
  The long-context baseline serves as a zero-iteration reference point that
  processes all raw posts at once.

  \paragraph{User filter.}
  Evaluation is restricted to users with at least four observed steps
  ($\texttt{MIN\_STEPS}=4$), excluding very short trajectories for which the
  ``compression vs.\ raw'' distinction is vacuous.

  \paragraph{Context-length check.}
  The concatenated-history prompt fits comfortably inside the context window of
  every tested model (Gemini-3-Flash, DeepSeek-V3, MiniMax-M2.5): its
  95th-percentile token count is well below each model's advertised limit. No
  truncation or sliding window is applied, so the streaming advantage observed
  in Table~\ref{tab:streaming_vs_longctx} reflects the benefit of iterative
  persona refinement rather than a capacity workaround.

%% file: custom.bib
@inproceedings{hu2024ladder,
  title={Ladder-of-thought: Using knowledge as steps to elevate stance detection},
  author={Hu, Kairui and Yan, Ming and Chong, Wen Haw and Yap, Yong Keong and Guan, Cuntai and Zhou, Joey Tianyi and Tsang, Ivor W},
  booktitle={2024 International Joint Conference on Neural Networks (IJCNN)},
  pages={1--8},
  year={2024},
  organization={IEEE}
}

@inproceedings{mu2024navigating,
  title={Navigating prompt complexity for zero-shot classification: A study of large language models in computational social science},
  author={Mu, Yida and Wu, Ben P and Thorne, William and Robinson, Ambrose and Aletras, Nikolaos and Scarton, Carolina and Bontcheva, Kalina and Song, Xingyi},
  booktitle={Proceedings of the 2024 Joint International Conference on Computational Linguistics, Language Resources and Evaluation (LREC-COLING 2024)},
  pages={12074--12086},
  year={2024}
}

@article{kheiri2023sentimentgpt,
  title={Sentimentgpt: Exploiting gpt for advanced sentiment analysis and its departure from current machine learning},
  author={Kheiri, Kiana and Karimi, Hamid},
  journal={arXiv preprint arXiv:2307.10234},
  year={2023}
}

@article{wu2024understanding,
  title={Understanding the role of user profile in the personalization of large language models},
  author={Wu, Bin and Shi, Zhengyan and Rahmani, Hossein A and Ramineni, Varsha and Yilmaz, Emine},
  journal={arXiv preprint arXiv:2406.17803},
  year={2024}
}

@inproceedings{shi2025personax,
  title={PersonaX: A recommendation agent-oriented user modeling framework for long behavior sequence},
  author={Shi, Yunxiao and Xu, Wujiang and Zeqi, Zhang and Zi, Xing and Wu, Qiang and Xu, Min},
  booktitle={Findings of the Association for Computational Linguistics: ACL 2025},
  pages={5764--5787},
  year={2025}
}

@article{zhao2025nextquill,
  title={Nextquill: Causal preference modeling for enhancing llm personalization},
  author={Zhao, Xiaoyan and You, Juntao and Zhang, Yang and Wang, Wenjie and Cheng, Hong and Feng, Fuli and Ng, See-Kiong and Chua, Tat-Seng},
  journal={arXiv preprint arXiv:2506.02368},
  year={2025}
}

@inproceedings{qiu2025measuring,
  title={Measuring what makes you unique: Difference-aware user modeling for enhancing llm personalization},
  author={Qiu, Yilun and Zhao, Xiaoyan and Zhang, Yang and Bai, Yimeng and Wang, Wenjie and Cheng, Hong and Feng, Fuli and Chua, Tat-Seng},
  booktitle={Findings of the Association for Computational Linguistics: ACL 2025},
  pages={21258--21277},
  year={2025}
}

@article{ren2026alpbench,
  title={ALPBench: A Benchmark for Attribution-level Long-term Personal Behavior Understanding},
  author={Ren, Lu and She, Junda and Luo, Xinchen and Wang, Tao and Ye, Xin and Zhang, Xu and Wang, Muxuan and Yang, Xiao and Wang, Chenguang and Xie, Fei and others},
  journal={arXiv preprint arXiv:2602.03056},
  year={2026}
}

@article{bartkowiak2025edgewisepersona,
  title={EdgeWisePersona: A Dataset for On-Device User Profiling from Natural Language Interactions},
  author={Bartkowiak, Patryk and Podstawski, Michal},
  journal={arXiv preprint arXiv:2505.11417},
  year={2025}
}

@article{liu2025itdr,
  title={ITDR: An Instruction Tuning Dataset for Enhancing Large Language Models in Recommendations},
  author={Liu, Zekun and Huang, Xiaowen and Sang, Jitao},
  journal={arXiv preprint arXiv:2508.05667},
  year={2025}
}

@article{prottasha2025user,
  title={User profile with large language models: Construction, updating, and benchmarking},
  author={Prottasha, Nusrat Jahan and Kowsher, Md and Raman, Hafijur and Anny, Israt Jahan and Bhat, Prakash and Garibay, Ivan and Garibay, Ozlem},
  journal={arXiv preprint arXiv:2502.10660},
  year={2025}
}

@inproceedings{geng2022recommendation,
  title={Recommendation as language processing (rlp): A unified pretrain, personalized prompt \& predict paradigm (p5)},
  author={Geng, Shijie and Liu, Shuchang and Fu, Zuohui and Ge, Yingqiang and Zhang, Yongfeng},
  booktitle={Proceedings of the 16th ACM conference on recommender systems},
  pages={299--315},
  year={2022}
}

@article{gao2023chat,
  title={Chat-rec: Towards interactive and explainable llms-augmented recommender system},
  author={Gao, Yunfan and Sheng, Tao and Xiang, Youlin and Xiong, Yun and Wang, Haofen and Zhang, Jiawei},
  journal={arXiv preprint arXiv:2303.14524},
  year={2023}
}

@inproceedings{bao2023tallrec,
  title={Tallrec: An effective and efficient tuning framework to align large language model with recommendation},
  author={Bao, Keqin and Zhang, Jizhi and Zhang, Yang and Wang, Wenjie and Feng, Fuli and He, Xiangnan},
  booktitle={Proceedings of the 17th ACM conference on recommender systems},
  pages={1007--1014},
  year={2023}
}

@inproceedings{wang2025lettingo,
  title={Lettingo: Explore user profile generation for recommendation system},
  author={Wang, Lu and Zhang, Di and Yang, Fangkai and Zhao, Pu and Liu, Jianfeng and Zhan, Yuefeng and Sun, Hao and Lin, Qingwei and Deng, Weiwei and Zhang, Dongmei and others},
  booktitle={Proceedings of the 31st ACM SIGKDD Conference on Knowledge Discovery and Data Mining V. 2},
  pages={2985--2995},
  year={2025}
}

@inproceedings{lu2025prompt,
  title={Prompt tuning as user inherent profile inference machine},
  author={Lu, Yusheng and Du, Zhaocheng and Li, Xiangyang and Jia, Pengyue and Wang, Yejing and Liu, Weiwen and Wang, Yichao and Guo, Huifeng and Tang, Ruiming and Dong, Zhenhua and others},
  booktitle={Proceedings of the 34th ACM International Conference on Information and Knowledge Management},
  pages={5898--5906},
  year={2025}
}

@article{liu2025recoworld,
  title={Recoworld: Building simulated environments for agentic recommender systems},
  author={Liu, Fei and Lin, Xinyu and Yu, Hanchao and Wu, Mingyuan and Wang, Jianyu and Zhang, Qiang and Zhao, Zhuokai and Xia, Yinglong and Zhang, Yao and Li, Weiwei and others},
  journal={arXiv preprint arXiv:2509.10397},
  year={2025}
}

@article{piao2025agentsociety,
  title={Agentsociety: Large-scale simulation of llm-driven generative agents advances understanding of human behaviors and society},
  author={Piao, Jinghua and Yan, Yuwei and Zhang, Jun and Li, Nian and Yan, Junbo and Lan, Xiaochong and Lu, Zhihong and Zheng, Zhiheng and Wang, Jing Yi and Zhou, Di and others},
  year={2025}
}

@article{wang2023recagent,
  title={Recagent: A novel simulation paradigm for recommender systems},
  author={Wang, Lei and Zhang, Jingsen and Chen, Xiaowen and Lin, Yankai and Song, Ruihua and Zhao, Wayne Xin and Wen, Ji-Rong},
  journal={arXiv preprint arXiv:2306.02552},
  year={2023}
}

@article{jiang2025know,
  title={Know me, respond to me: Benchmarking llms for dynamic user profiling and personalized responses at scale},
  author={Jiang, Bowen and Hao, Zhuoqun and Cho, Young-Min and Li, Bryan and Yuan, Yuan and Chen, Sihao and Ungar, Lyle and Taylor, Camillo J and Roth, Dan},
  journal={arXiv preprint arXiv:2504.14225},
  year={2025}
}

@article{zhao2026inside,
  title={Inside Out: Evolving User-Centric Core Memory Trees for Long-Term Personalized Dialogue Systems},
  author={Zhao, Jihao and Chen, Ding and Fan, Zhaoxin and Xu, Kerun and Hu, Mengting and Tang, Bo and Xiong, Feiyu and Li, Zhiyu},
  journal={arXiv preprint arXiv:2601.05171},
  year={2026}
}

@article{li2025conf,
  title={Conf-Profile: A Confidence-Driven Reasoning Paradigm for Label-Free User Profiling},
  author={Li, Yingxin and Zhao, Jianbo and Ren, Xueyu and Tang, Jie and You, Wangjie and Chen, Xu and Zhou, Kan and Feng, Chao and Ran, Jiao and Meng, Yuan and others},
  journal={arXiv preprint arXiv:2509.18864},
  year={2025}
}

@inproceedings{sabouri2025towards,
  title={Towards explainable temporal user profiling with LLMs},
  author={Sabouri, Milad and Mansoury, Masoud and Lin, Kun and Mobasher, Bamshad},
  booktitle={Adjunct Proceedings of the 33rd ACM Conference on User Modeling, Adaptation and Personalization},
  pages={219--227},
  year={2025}
}

@inproceedings{qi2021hierec,
  title={HieRec: Hierarchical user interest modeling for personalized news recommendation},
  author={Qi, Tao and Wu, Fangzhao and Wu, Chuhan and Yang, Peiru and Yu, Yang and Xie, Xing and Huang, Yongfeng},
  booktitle={Proceedings of the 59th Annual Meeting of the Association for Computational Linguistics and the 11th International Joint Conference on Natural Language Processing (Volume 1: Long Papers)},
  pages={5446--5456},
  year={2021}
}

@article{purificato2024user,
  title={User modeling and user profiling: A comprehensive survey},
  author={Purificato, Erasmo and Boratto, Ludovico and De Luca, Ernesto William},
  journal={arXiv preprint arXiv:2402.09660},
  year={2024}
}

@article{tan2023user,
  title={User modeling in the era of large language models: Current research and future directions},
  author={Tan, Zhaoxuan and Jiang, Meng},
  journal={arXiv preprint arXiv:2312.11518},
  year={2023}
}

@inproceedings{shi2025you,
  title={You are what you bought: Generating customer personas for e-commerce applications},
  author={Shi, Yimin and Fei, Yang and Zhang, Shiqi and Wang, Haixun and Xiao, Xiaokui},
  booktitle={Proceedings of the 48th International ACM SIGIR Conference on Research and Development in Information Retrieval},
  pages={1810--1819},
  year={2025}
}

@article{singh2025openai,
  title={Openai gpt-5 system card},
  author={Singh, Aaditya and Fry, Adam and Perelman, Adam and Tart, Adam and Ganesh, Adi and El-Kishky, Ahmed and McLaughlin, Aidan and Low, Aiden and Ostrow, AJ and Ananthram, Akhila and others},
  journal={arXiv preprint arXiv:2601.03267},
  year={2025}
}

@article{hurst2024gpt,
  title={Gpt-4o system card},
  author={Hurst, Aaron and Lerer, Adam and Goucher, Adam P and Perelman, Adam and Ramesh, Aditya and Clark, Aidan and Ostrow, AJ and Welihinda, Akila and Hayes, Alan and Radford, Alec and others},
  journal={arXiv preprint arXiv:2410.21276},
  year={2024}
}

@misc{minimax2025m25,
  author       = {{MiniMax}},
  title        = {MiniMax-M2.5 Model},
  year         = {2025},
  publisher    = {huggingface},
  url          = {https://huggingface.co/MiniMaxAI/MiniMax-M2.5},
}

@misc{zaiorg2025glm47,
  author       = {{ZAI}}, 
  title        = {GLM-4.7 Model},
  year         = {2025},
  publisher    = {huggingface},
  url          = {https://huggingface.co/zai-org/GLM-4.7},
}

@article{liu2025deepseek,
  title={Deepseek-v3. 2: Pushing the frontier of open large language models},
  author={Liu, Aixin and Mei, Aoxue and Lin, Bangcai and Xue, Bing and Wang, Bingxuan and Xu, Bingzheng and Wu, Bochao and Zhang, Bowei and Lin, Chaofan and Dong, Chen and others},
  journal={arXiv preprint arXiv:2512.02556},
  year={2025}
}

@article{grattafiori2024llama,
  title={The llama 3 herd of models},
  author={Grattafiori, Aaron and Dubey, Abhimanyu and Jauhri, Abhinav and Pandey, Abhinav and Kadian, Abhishek and Al-Dahle, Ahmad and Letman, Aiesha and Mathur, Akhil and Schelten, Alan and Vaughan, Alex and others},
  journal={arXiv preprint arXiv:2407.21783},
  year={2024}
}

@misc{deepmind2025gemini3flash,
  author       = {{Google DeepMind}},
  title        = {Gemini 3 Flash Model Card},
  year         = {2025},
  url          = {https://storage.googleapis.com/deepmind-media/Model-Cards/Gemini-3-Flash-Model-Card.pdf},
}

@article{yang2025qwen3,
  title={Qwen3 technical report},
  author={Yang, An and Li, Anfeng and Yang, Baosong and Zhang, Beichen and Hui, Binyuan and Zheng, Bo and Yu, Bowen and Gao, Chang and Huang, Chengen and Lv, Chenxu and others},
  journal={arXiv preprint arXiv:2505.09388},
  year={2025}
}

@article{agarwal2025gpt,
  title={gpt-oss-120b \& gpt-oss-20b model card},
  author={Agarwal, Sandhini and Ahmad, Lama and Ai, Jason and Altman, Sam and Applebaum, Andy and Arbus, Edwin and Arora, Rahul K and Bai, Yu and Baker, Bowen and Bao, Haiming and others},
  journal={arXiv preprint arXiv:2508.10925},
  year={2025}
}

@inproceedings{salemi2024lamp,
  title={Lamp: When large language models meet personalization},
  author={Salemi, Alireza and Mysore, Sheshera and Bendersky, Michael and Zamani, Hamed},
  booktitle={Proceedings of the 62nd Annual Meeting of the Association for Computational Linguistics (Volume 1: Long Papers)},
  pages={7370--7392},
  year={2024}
}

@article{zhao2025llms,
  title={Do llms recognize your preferences? evaluating personalized preference following in llms},
  author={Zhao, Siyan and Hong, Mingyi and Liu, Yang and Hazarika, Devamanyu and Lin, Kaixiang},
  journal={arXiv preprint arXiv:2502.09597},
  year={2025}
}

@inproceedings{tan2025personabench,
  title={Personabench: Evaluating ai models on understanding personal information through accessing (synthetic) private user data},
  author={Tan, Juntao and Yang, Liangwei and Liu, Zuxin and Liu, Zhiwei and RN, Rithesh and Awalgaonkar, Tulika Manoj and Zhang, Jianguo and Yao, Weiran and Zhu, Ming and Kokane, Shirley and others},
  booktitle={Findings of the Association for Computational Linguistics: ACL 2025},
  pages={878--893},
  year={2025}
}

@article{jiang2025personamem,
  title={Personamem-v2: Towards personalized intelligence via learning implicit user personas and agentic memory},
  author={Jiang, Bowen and Yuan, Yuan and Shen, Maohao and Hao, Zhuoqun and Xu, Zhangchen and Chen, Zichen and Liu, Ziyi and Vijjini, Anvesh Rao and He, Jiashu and Yu, Hanchao and others},
  journal={arXiv preprint arXiv:2512.06688},
  year={2025}
}

@article{eke2019survey,
  title={A survey of user profiling: State-of-the-art, challenges, and solutions},
  author={Eke, Christopher Ifeanyi and Norman, Azah Anir and Shuib, Liyana and Nweke, Henry Friday},
  journal={IEEE Access},
  volume={7},
  pages={144907--144924},
  year={2019},
  publisher={IEEE}
}
